\theoremstyle{definition}
\newtheorem{definition}{Definition}%[section]
\newtheorem{theorem}{Theorem}%[section]
\newtheorem{lemma}[theorem]{Lemma}
\newsavebox{\tempbox}
\begin{document}

\title{Feature-based Federated Transfer Learning: Communication Efficiency, Robustness and Privacy}
\author{{Feng Wang,  M. Cenk Gursoy and Senem Velipasalar}
\thanks{The authors are with the Department of Electrical
		Engineering and Computer Science, Syracuse University, Syracuse, NY, 13244. E-mail: fwang26@syr.edu, mcgursoy@syr.edu, svelipas@syr.edu.}
\thanks{The material in this paper has been presented in part at the IEEE Global Communications Conference (Globecom), Dec. 2022.
}}
\maketitle

\thispagestyle{empty}

\begin{abstract}
%Federated learning has attracted growing interest as it preserves the clients' privacy. As a variant of federated learning, federated transfer learning utilizes the knowledge from similar tasks and thus has also been intensively studied. However, due to the limited radio spectrum, addressing the communication efficiency of federated learning performed over wired/wireless links is critical since some tasks may require thousands of Terabytes of uplink payload. 
In this paper, we propose feature-based federated transfer learning as a novel approach to improve communication efficiency by reducing the uplink payload by multiple orders of magnitude compared to that of existing approaches in federated learning and federated transfer learning. Specifically, in the proposed feature-based federated learning, we design the extracted features and outputs to be uploaded instead of parameter updates. For this distributed learning model, we determine the required payload and provide comparisons with the existing schemes. Subsequently, we analyze the robustness of feature-based federated transfer learning against packet loss, data insufficiency, and quantization. Finally, we address privacy considerations by defining and analyzing label privacy leakage and feature privacy leakage, and investigating mitigating approaches. For all aforementioned analyses, we evaluate the performance of the proposed learning scheme via experiments on an image classification task and a natural language processing task to demonstrate its effectiveness. \footnote{Code implementation available at: https://github.com/wfwf10/Feature-based-Federated-Transfer-Learning.}
\end{abstract}

\begin{IEEEkeywords}
	Federated learning, transfer learning, communication efficiency, robustness, privacy
\end{IEEEkeywords}

\section{Introduction}

Federated learning (FL) is a form of distributed learning in which only model parameters are exchanged while datasets are kept local with the goal to maintain data privacy  \cite{konevcny2016federated}. Typically, 
in FL, a central server, known as the parameter server (PS), coordinates the collaborative training of a deep neural network (DNN) model \cite{kairouz2019advances} by aggregating updates on the weights and biases from multiple participating devices/clients. 
The widespread use of mobile phones and tablets with sufficient computational power and wireless communication capability enables FL in a wide range of applications such as speech recognition and image classification. Internet of Things (IoT) with large number of devices/sensors may generate even larger amount of data while casting further constraints on the computational power and transmission power \cite{mills2019communication}. With these, FL has gained widespread attention from both academic and industrial communities, resulting in a rapid increase in research on FL techniques, such as federated averaging (FedAvg) \cite{mcmahan2017communication}, federated transfer learning (FTL) \cite{cheng2022federated, yang2019federated}, and FL with differential privacy (DP) \cite{wei2020federated}. Among them, model-based FTL stands out as a particularly efficient scheme, because it transfers a well-trained source model into the target task of interest where samples may have different input and output spaces, and thus FTL requires fewer training samples and shortens the training process \cite{yin2019feature, 
chen2020fedhealth, ju2020federated, yang2020fedsteg}. 
Especially, with the  availability of open-source big data repositories and deep learning models \cite{pan2009survey, weiss2016survey}, transfer learning (TL) has become an attractive solution for various applications, such as text sentiment classification \cite{wang2011heterogeneous, kaya2013transfer, khan2019enhanced}, image classification \cite{duan2012learning, kulis2011you, zhu2011heterogeneous, shaha2018transfer, hussain2018study, han2018new}, human activity classification \cite{harel2010learning, park2016micro, agarwal2021transfer}, software defect classification \cite{ma2012transfer, nam2017heterogeneous}, and multi-language text classification \cite{prettenhofer2010cross, zhou2014heterogeneous, zhou2014hybrid}.

However, when the clients in FL are mobile devices and the training process is performed over a wireless network such as a wireless local area network (WLAN) or cellular system, it is crucial to minimize the data sent during updates of the model parameters. This is due to the limited availability of radio spectrum, and the uncertain nature of the wireless environment caused by factors such as channel fading and interference \cite{chen2021distributed}. In the context of FL applications on IoT devices, the added constraints on computation and power consumption for uploading data make it even more challenging to perform complex tasks that require deep neural networks (DNNs) with many layers and a large number of trainable parameters. While many existing FL studies have focused on shallow DNNs with a few layers, state-of-the-art DNN models used in various applications often have dozens of layers and millions or billions of trainable parameters in order to achieve the highest accuracy in e.g., image segmentation \cite{yuan2021segmentation, jain2021semask}, image classification \cite{dai2021coatnet, foret2020sharpness}, object detection \cite{liu2021swin, ghiasi2021simple}, question answering \cite{yamada2020luke}, medical image segmentation \cite{srivastava2021msrf}, and speech recognition \cite{zhang2020pushing}. 

\subsection{Contributions}
In response to the above-mentioned constraints on the uplink payload budget and limitations on the local computational power, we develop a novel scheme to perform model-based transfer FL and refer to this scheme as feature-based federated transfer learning (FbFTL). In FbFTL, rather than uploading the gradients, the input and output of the subset of DNN to be trained are uploaded with the goal to reduce the uplink payload. This is one of the key differences from prior FL and FTL strategies. In this paper, after describing the proposed FbFTL scheme, we analyze its overall uplink payload and provide comparisons with prior schemes to quantify the gains. Specifically, we test this approach by transferring the VGG-16 convolutional neural network (CNN) model \cite{simonyan2014very} trained with ImageNet dataset \cite{deng2009imagenet} to CIFAR-10 dataset \cite{krizhevsky2009learning}, and show that FL, two types of FTL, and FbFTL require uploading 3216 Tb, 949.5 Tb, 599 Tb, and 6.6 Gb of data, respectively, until performance convergence. With this, we demonstrate that the proposed FbFTL outperforms FL and FTL substantially by reducing the upload requirements by at least 5 orders of magnitude. We note that the ITU standard of 5G uplink user experienced data rate is only 50 Mb/s \cite{series2017minimum}, and even the 6G uplink user experienced data rate at Gbit/s level may not sufficiently support such huge uploading requirements of regular FL. Additionally, to our best knowledge, existing works on improving FL efficiency (such as FedAvg \cite{mcmahan2017communication}, sparsification \cite{sun2020toward, ozfatura2021time} and quantization \cite{he2021cossgd, konevcny2016federatedefficiency} with or without error feedback \cite{basu2019qsparse, richtarik2021ef21}, federated distillation \cite{li2019fedmd, lin2020ensemble, ahn2019wireless}, pruning \cite{jiang2022model, liu2020pruning} or partially trainable network \cite{sidahmed2021efficient, yang2022partial}, and over-the-air computation \cite{aygun2022hierarchical, yang2020federated, sun2022time}) still consider the transmission of gradient updates and can achieve a relatively limited reduction in payload and experience degradation in performance. For instance,  the payload reduction is of only two orders of magnitude of the original payload on the same CIFAR-10 dataset  \cite{shahid2021communication, xu2020compressed}. Therefore, FbFTL is still a more effective approach in reducing the uplink payload even after the efficiency of FL  has been improved via the aforementioned methods. We further show that FbFTL has substantially lower downlink payload, and requires significantly less computational power from the edge devices, and therefore it is much more friendly in terms of facilitating the training tasks on clients with limited power budget. 

To validate our approach, we further analyze the robustness of FbFTL against FL and FTL. We show significant reduction on packet loss rate (PLR) with FbFTL with the same block loss rate (BLR), and  demonstrate the robustness of FbFTL with insufficient training data. While there have been numerous studies on gradient quantization and sparsification to reduce the uplink payload \cite{reisizadeh2020fedpaq, shlezinger2020federated}, we illustrate that FbFTL also achieves significant compression rate by quantization while the validation accuracy is barely affected. Furthermore, we analyze the privacy leakage to a potential adversary. We define the label privacy leakage and feature privacy leakage for both feature-based and gradient-based frameworks. To our best knowledge, this is the first work that divides the privacy into different categories, defines each type, and proposes mitigation approaches. Via experimental results, we show that FbFTL has better performance (e.g., classification accuracy) with the same level of privacy protection when each client obtains a small set of samples.
%Some of the work was published in \cite{Gurs2212:Communication}, including the majority of algorithm description and payload analysis in section \ref{related}, \ref{sec:FbFTL} and \ref{sec:exp}, and defined the local label privacy as in section \ref{sec:privacy}. In this paper, we further discuss the computation time complexity in section \ref{subsec:FbFTLcomplex}. Subsequently, we in this paper provide the robustness analysis in section \ref{sec:robust}, and conclude all four types of privacy analysis and the experiments in section \ref{sec:privacy}.
% arxiv version: \cite{wang2022communication}

In summary, our main contribution can be listed as follows: 
\begin{itemize}
    \item We propose the FbFTL scheme that uploads extracted features instead of gradients to reduce uplink payload by five orders of magnitude. 
    \item We analyze the robustness of FbFTL, and demonstrate that it maintains the payload advantage when strategies such as sparsification, quantization and error feedback are deployed.
    \item We study
    the  privacy guarantees in terms of entropy based formulations that quantify the uncertainty and also by utilizing differential privacy mechanisms.
    \item We show that a small batch size is preferred to protect label privacy of shuffled batches, and FbFTL has better performance (such as in terms of accuracy) given the differential privacy constraint for input privacy in a small batch.
\end{itemize}

\subsection{Organization and Notations}

\begin{table}
    % \small
	\fontsize{8}{9}
	\selectfont
	\begin{center}
	    \caption{Notations}
		\begin{tabular}{  p{20mm} p{70mm}  }\hline
			\underline{FL Parameters:} &  Section \ref{subsec:FL} \\
			$\mathcal{U}$, $U$ & Set of clients, Number of clients \\ 
			$\mathcal{K}_u$ & Set of local training samples at client $u$  \\ 
			$K_u$ & Number of local training samples at client $u$ \\ 
			$\textbf{s}_{u,k}$ & The $k$th sample at client $u$ \\ 
			$\textbf{x}_{u,k}$ & Input vector of $\textbf{s}_{u,k}$ with $N_0$ attributes \\ 
			$\textbf{y}_{u,k}$ & Output vector of $\textbf{s}_{u,k}$ with $N$ attributes (or classes) \\ 
			$f_{\pmb{\theta}}(\textbf{x})$ & DNN that maps the input $\textbf{x}$ to estimated output $\hat{\textbf{y}}$ \\ 
			$\pmb{\theta}$ & Trainable parameter vector of DNN \\ 
			$L(f_{\pmb{\theta}} | \textbf{s}_{u,k})$ & Loss function \\ 
			$\textbf{g}_u$ & Sum of updates (or gradients) at client $u$ \\ 
			$\alpha$ & Learning rate \\ 
			$I$ & Number of communication iterations during training process   (superscript indicates methods) \\ 
			$C$ & Fraction of clients selected in each iteration \\ 
			\hline
			\underline{FTL Parameters:} & Section \ref{subsec:FTL}, \ref{subsec:FbFTLLearn}, \ref{subsec:FbFTLpayload} \\
			$M'$ & Number of layers without trainable parameters \\ 
			$M$ & Number of layers with trainable parameters \\ 
			$m_c$ & Index of the layer partitioning DNN into feature extraction part and task-specific part \\
			$f^1_{\pmb{\theta}^1}$ & Feature extraction sub-model with trainable parameters $\pmb{\theta}^1$ \\ 
			$f^2_{\pmb{\theta}^2}$ & Task-specific sub-model with trainable parameters $\pmb{\theta}^2$ \\ 
			$\textbf{z}_{u,k}$ & Extracted features $f^1_{\pmb{\theta}^1}(\textbf{x}_{u,k})$ \\
			$T_m$ & Number of trainable parameters in the $m$th trainable layer \\ 
			$N^{-}_{m}$ & Number of input nodes at the $m$th trainable layer  \\ 
			$N^{+}_{m}$ & Number of output nodes at the $m$th trainable layer \\ 
			\hline
			\underline{Performance Measurements:} & \hspace{30pt} Section \ref{subsec:FL}, \ref{subsec:FbFTLpayload}, ,\ref{subsec:FbFTLcomplex} \\
			$d$ & Payload of each float number in bits \\ 
			$P$ & Uplink payload  (superscript indicates methods) \\ 
			$D$ & Downlink payload  (superscript indicates methods) \\ 
			$O(X)$ & Computation time complexity for training \\
			\hline
			\underline{Robustness Measurements:} & \hspace{25pt} Section \ref{sec:robust} \\
			PLR & Packet loss rate \\ 
			BLR & Block loss rate \\ 
			$n_b$ & Number of blocks in each packet \\ 
			$n_r$ & Maximum number of packet retransmission allowed \\ 
% 			$n_q$ & Number of quantization bits \\ 
% 			$\textbf{z}^i$ & Quantized features \\ 
% 			$\textbf{z}'$ & Decoded features \\ 
            $\mathcal{S}_r(\cdot)$ & Sparsification function keeping a ratio $r$ of the input elements \\
            $\mathcal{Q}_q(\cdot)$ & Quantization function keeping $q$ bits of the input elements \\
            $\textbf{m}_{u,t}$ & Local memory vector for error feedback \\
			\hline
			\underline{Privacy Measurements:} & \hspace{10pt} Section \ref{sec:privacy} \\
			$H( \cdot )$ & Entropy to quantify the  uncertainty \\ 
			$N$ & Output dimension (or number of labels) \\ 
			$\mathcal{B}$ & Set of possible batches \\ 
			$|\mathcal{B}|$ & Number of possible batches \\ 
			$B$ & Unknown batch type of client $u$ as a random variable \\ 
			$P_y$ & Distribution over labels  \\ 
			$P_{y,\text{uni}}$ & Uniform label distribution  \\ 
			$n^y_{a,b}$ & Count of output label type $a$ in a batch of type $b$ \\ 
			$\textbf{P}_{B|\text{uni}}$ & Adversary's presumed probability distribution of batches without prior knowledge \\ 
			$H(B\ |\ P_{y,\text{uni}} )$ & Adversary's uncertainty of a batch without prior knowledge of the content \\ 
            $P_{y,\text{true}}$ & True label distribution  \\ 
			$\textbf{P}_{B|\text{true}}$ & Adversary's presumed probability distribution of batches given the true sample distribution \\ 
			$H\{ B\ |\ P_{y,\text{true}}\}$ & Adversary's uncertainty of a batch given the true sample distribution $P_{y,\text{true}}$ \\
			$L^l_s$ & Label Privacy Leakage with Statistical Information \\ 
			$c(b)$ & Count for batches of different types among shuffled batches \\ 
			$\mathcal{C_U}$ & Number of each batch type $b$ in shuffled batches as a set of random variables \\ 
			$\textbf{P}_{B,\text{emp}}$ & Adversary's presumed empirical distribution of batches given shuffled batches $\mathcal{C_U} = c_U = [c(1), c(2), \ldots, c(|\mathcal{B}|)]$ \\ 
			$H(B | \mathcal{C_U} = c_U)$ & Adversary's uncertainty of a batch given shuffled batches $\mathcal{C_U}$ \\
			$L^l_q$ & Label Privacy Leakage with Query \\ 
            $L_t^l$ & Total Label Privacy Leakage \\
			% $L^i_m$ & Upper bound of mutual instance privacy \\ 
			% $L^i_l$ & Upper bound of local instance privacy \\ 
			$\textbf{n}$ & Additive independent Gaussian noise to protect local instance privacy, where $\textbf{n} \sim \mathcal{N}(0,\,\sigma^2 R^2 \text{std}^2 (  g(\textbf{d}_1) ) \textbf{I})$ \\ 
			$\textbf{d}$ & Training samples for one batch \\ 
			$g(\textbf{d})$ & Uploading vector in one batch \\ 
			$(\epsilon, \delta)$ & Differential privacy level \\ 
			$R$ & Maximum relative distance \\  \hline
            Additionally: & $f$ in subscript ($\star_f$) denotes transfer learning updating full model, and $c$ in subscript ($\star_c$) denotes transfer learning updating task-specific sub-model \\ \hline
		\end{tabular}
		\label{tab:notation}
	\end{center}
\end{table}

The remainder of the paper is organized as follows. In Section \ref{related}, we discuss the preliminary concepts regarding FL and FTL, describe the system design via FedAvg, and demonstrate their uplink payload. In Section \ref{sec:FbFTL}, we introduce the proposed FbFTL algorithm, introduce the learning structure and system design, and compare its payload with FL and FTL. In this section, we also evaluate the performance of FbFTL via simulations, and provide comparisons with FL and FTL. In Section \ref{sec:robust}, we illustrate the robustness of FbFTL in the presence of packet loss, data insufficiency, and quantization. In Section \ref{sec:privacy}, we define the label privacy leakage and the feature privacy leakage. For label privacy, we discuss the leakage with statistical information and the leakage with query, and investigate mitigation approaches to avoid these privacy leakages. For feature privacy leakage, we illustrate the mitigation approach via differential privacy. Finally, we conclude the paper in Section \ref{sec:con}. The list of all notations in the paper is provided in Table \ref{tab:notation} as a quick reference.

\section{Preliminaries} \label{related}  % We also talk about other TL now
% \section{Preliminaries -- FL and FTL} \label{related}
In this section, we introduce preliminary concepts related to FL and FTL, and analyze the requirements on their successful uplink payload.

\subsection{Federated Learning} \label{subsec:FL}
We address a common FL task in which a PS coordinates a set $\mathcal{U}$ of $U$ clients to cooperatively train a DNN model. Each client $u$ possesses a local dataset $\mathcal{K}_u$ with $K_u$ samples for training. In this dataset,  the $k$th sample $\textbf{s}_{u,k} \in \mathcal{K}_u$ is comprised of an input vector $\textbf{x}_{u,k} \in \mathbb{R}^{N_0}$ and an output vector $\textbf{y}_{u,k} \in \mathbb{R}^N$. The DNN, represented by the function $f_{\pmb{\theta}}(\textbf{x})$, maps the input $\textbf{x}$ to an  output $\hat{\textbf{y}}$ (as an estimate of $\textbf{y}$) with trainable parameter vector $\pmb{\theta}$. The goal of the FL training process  is to update the parameter vector $\pmb{\theta}$ using the training samples from each client in order to minimize the expected loss $\mathbb{E} (L(f_{\pmb{\theta}} | \textbf{s}_{test}) )$ on the unseen sample $\textbf{s}_{test}$ with the same distribution as the training samples. For most DNNs with classification tasks, the output label $\textbf{y}$ is an axis-aligned unit vector with one element equal to 1 indicating the class of this sample, and all others equal to 0. In this case, the loss function is typically the categorical cross-entropy:
\begin{equation} \label{eq:crossentropy}
L(f_{\pmb{\theta}} | \textbf{s}_{u,k}) = - \sum_{n=1}^{N} {\pmb{y}_{u,k}[n] \log{f_{\pmb{\theta}}(\pmb{x}_{u,k})[n]}} .
\end{equation}

In order to minimize the loss and keep the training samples local, the authors in \cite{mcmahan2017communication} proposed an iterative distributed optimization scheme called FedAvg with the following steps:
\begin{enumerate}
    \item The PS initiates trainable parameters $\pmb{\theta}$, and broadcasts the model structure $f$ with non-trainable parameters to each client.
    \item The PS chooses a subset of $UC$ clients (with $C$ denoting the fraction of clients in this iteration) and broadcasts the trainable parameters $\pmb{\theta}$.
    \item Each client performs stochastic gradient descent (SGD) to obtain the parameter update corresponding to each training sample: $\nabla_{\pmb{\theta}}{L(f_{\pmb{\theta}} | \textbf{s}_{u,k})}$.
    \item Each client sends the sum of the updates over all local samples $\textbf{g}_u = \sum_{k=1}^{K_u}{\nabla_{\pmb{\theta}}{L(f_{\pmb{\theta}} | \textbf{s}_{u,k})}}$ and $K_u$ to the PS.
    \item The PS updates the parameter $\pmb{\theta}$  with $\pmb{\theta} - \sum_{u=1}^{U}{\frac{\alpha}{K_u} \textbf{g}_u}$ where $\alpha$ is the learning rate that controls the training speed.
    \item Return to step 2) until convergence.
\end{enumerate}

Let us assume that the number of iterations in FL with FedAvg is $I^{FL}$ (in the absence of any  processing and transmission failures). This implies that the total number of times the clients upload $\textbf{g}_u$ is $I^{FL} U C$ within the training process. Assume further that the DNN includes $M$ layers with trainable parameters (such as convolutional layer and fully connected layer, i.e., dense layer) and $M'$ layers without trainable parameters (such as pooling layer and residue connection).  The $m$th trainable layer has $T_m$ trainable parameters. Thus, for each trainable parameter, the client uploads one float number of $d$ bits for the corresponding element in the update $\textbf{g}_u$ during each iteration. Therefore, the overall uplink payload to train a DNN via FL with FedAvg is 
\begin{equation} \label{eq:PayloadFL}
P^{FL} = d I^{FL} U C \sum_{m=1}^{M}{T_m} \  \text{bits},
\end{equation}
and we will set this as a benchmark to compare with three other training methods in the remainder of this paper.

\subsection{Federated Transfer Learning} \label{subsec:FTL}
TL is an effective learning technique that utilizes knowledge from a different domain to enhance the performance in the target domain. FTL incorporates the privacy-preserving distributed learning paradigm into conventional TL in order to address the challenges of spectrum limitations in wireless applications and training data scarcity. In this paper, we consider FTL to be performed in the same fashion as in the previous subsection, where one PS orchestrates $U$ clients. Based on the difference between the domains, FTL can be divided into three different categories: instance-based FTL, feature-based FTL, and model-based FTL \cite{yang2019federated, 5640675}. The former two types of FTL assume similarity in the distribution of the input and output, and hence we in this paper focus on the model-based FTL which only assumes similarity in the functionality to extract a high-dimensional description from the input data. 

% \begin{figure}
% 	\centering
% 	\includegraphics[width=1\linewidth]{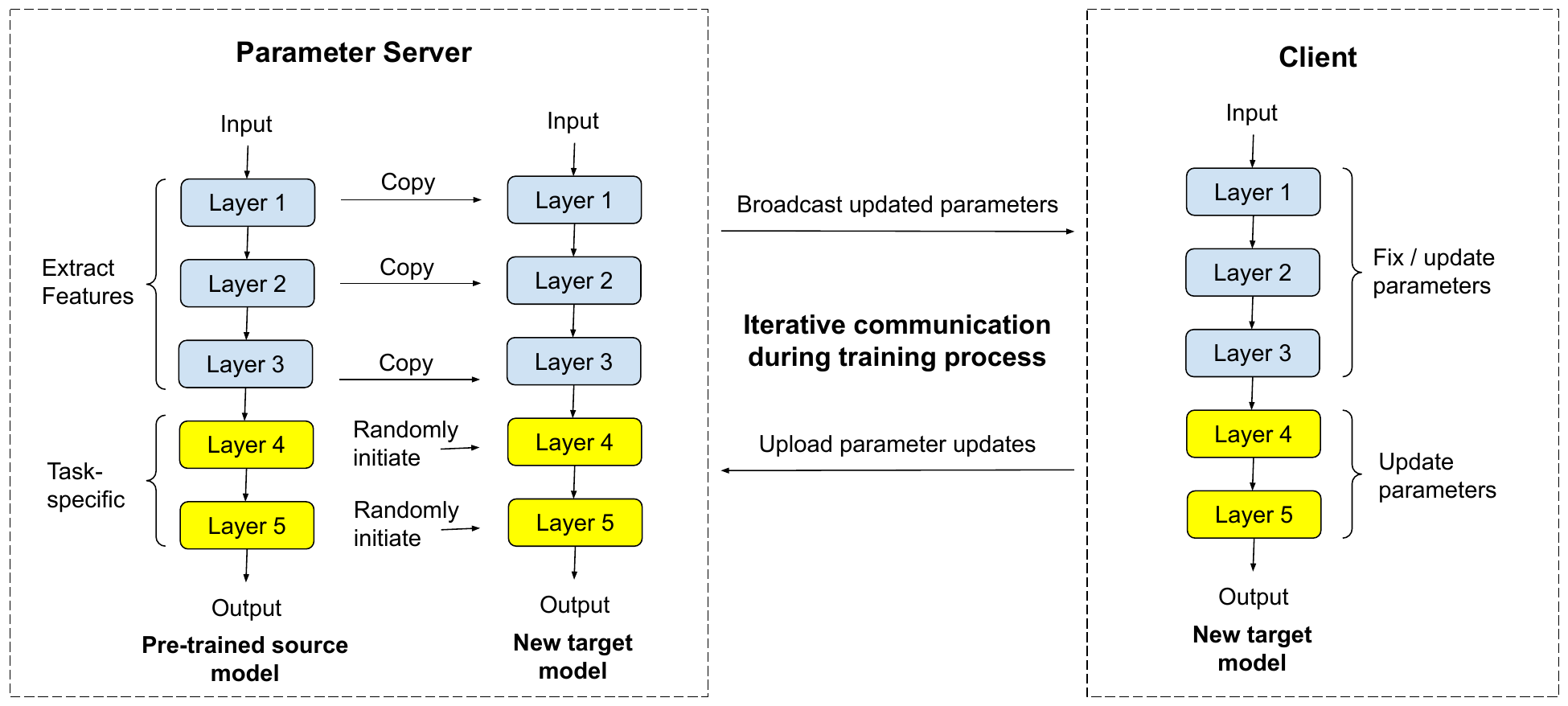}
% 	\caption{Diagram of the iterative training process of model-based federated transfer learning.}
% 	\label{fig:FTLdiagram}
% \end{figure}
%\begin{multicols}{2}
%\end{multicols}
\begin{figure}[h]
  \centering
  \includegraphics[width=0.5\textwidth]{}
  \caption{Diagram of the iterative training process of model-based federated transfer learning.}
  \label{fig:FTLdiagram}
\end{figure}
%\begin{multicols}{2}
%\end{multicols}
As shown in Fig. \ref{fig:FTLdiagram}, we consider FTL with a DNN model pre-trained with an open-source dataset that corresponds to a similar but not the same task as the source model. The goal here is to reduce the number of iterations during training. Many DNNs can be partitioned into two sections. The first section generates the high-dimensional features from the sample input data with general information. The second section carries out operations for a particular task, such as classification, and is less likely to be transferable to a new task. To move the knowledge of the source model into a new task before training, the feature extraction section of the source model is directly transferred to the new target model, while the task-specific part is randomly initiated. FTL can be performed by retraining all the parameters with new data. However, there typically exist a large  number of parameters to train, and in order to reduce the training time and data requirements, a common approach is to perform FTL by fixing the feature extraction part and simply updating the task-specific part. In particular, we select one fully connected layer $m_c$ (which is Layer 4 in Fig. \ref{fig:FTLdiagram}), close to the output without a paralleling path (such as residue connection), and randomly initiate all trainable parameters of layers $m_c, m_c+1, \ldots, M$ (which are Layer 4 and Layer 5 in Fig. \ref{fig:FTLdiagram}). During the distributed training process, all copied parameters are fixed, and we only update the parameters of layers $m_c, \ldots, M$.

Similar to the previous sub-section on FL, we determine the least requirements on successful uplink payload for FTL until convergence. For FTL that updates all parameters, assuming that FTL with FedAvg is iterated $I^{FTL}_{f}$ times, the overall uplink payload required for training is 
\begin{equation} \label{eq:PayloadFTLfull}
P^{FTL}_{f} = d I^{FTL}_{f} U C \sum_{m=1}^{M}{T_m} \  \text{bits}.
\end{equation}
On the other hand, assuming FTL with FedAvg that only updates the task-specific part with $I^{FTL}_{c}$ iterations and assuming that each sum update $\textbf{g}_u$ consists of $\sum_{m=m_c}^{M}{T_m}$ trainable parameters, the overall uplink payload for training is 
\begin{equation} \label{eq:PayloadFTLclassify}
P^{FTL}_{c} = d I^{FTL}_{c} U C \sum_{m=m_c}^{M}{T_m} \  \text{bits}.
\end{equation}
Typically, training from scratch requires more training samples, and thus we have $I^{FTL}_{c}UC \approx I^{FTL}_{f}UC < I^{FL}UC$. For most of the models, the majority of the DNN structure is dedicated to the feature extraction, and obviously $\sum_{m=m_c}^{M}{T_m} < \sum_{m=1}^{M}{T_m}$. Therefore, 
\begin{equation} \label{eq:compareFLFTL}
P^{FTL}_{c} < P^{FTL}_{f} < P^{FL}.
\end{equation}

\subsection{Other Recent Transfer and Federated Learning Schemes} \label{subsec:others}

\begin{table*}[t]
    \small
	\begin{center}
		\begin{tabular}{ | l || l | l | l | l | } \hline
		Method & Payload & Computation %& Performance 
  & Key Assumption  \\ \hline\hline
        % \vtop{\hbox{\strut first row}\hbox{\strut second row}}
		FL & iterative gradient-level & medium, distributed 
  %& medium 
  & start with random initialization  \\ \hline
		FTL & iterative gradient-level & low, distributed %& dependent on training portion 
  & start with source model  \\ \hline
		\textbf{FbFTL} (ours) & \textbf{one-time feature-level} & low, mostly centralized %& lesser 
  & start with source model  \\ \hline
	    EWC \cite{kirkpatrick2017overcoming} & iterative gradient-level & high, distributed %& medium 
     & evaluating many different tasks  \\ \hline
		SFL \cite{thapa2022splitfed} & iterative gradient-level & medium, hybrid of distributed and  centralized %& medium 
  & two servers needed \\ \hline
		PFL \cite{fallah2020personalized} & iterative gradient-level & very high, distributed %& better when heterogeneous 
  & large and heterogeneous local batch  \\ \hline
		\end{tabular}
    	\caption{Comparison between different federated learning methods}
    	\label{tab:compare}
	\end{center}
\end{table*}

While we in this paper focus on FL schemes that generalize to the majority of use cases, there are also recent works that improve the performance under certain assumptions and can be used in FL. We list these well-known methods, along with FL, FTL, and the proposed FbFTL, in Table \ref{tab:compare} and provide a comparison. Below, we briefly describe the key assumptions of these methods, and highlight the differences of the proposed FbFTL scheme. 

Specifically, elastic weight consolidation (EWC) \cite{kirkpatrick2017overcoming} focuses on a large number of different tasks simultaneously and the transfer between them by remembering the importance of each weight. The performance is typically measured on all experienced tasks. EWC is inspired by neuroscience and Bayesian inference, and it aims to quantify the importance of each weight to the tasks the model has previously learned. The fundamental idea is that the weights critical to prior tasks should be altered less when new data is encountered. In this paper, we consider a different problem, where we transfer to a single target task from a single source model on another task whose performance is no longer considered. In this problem setting, there is no clear difference between EWC and plain SGD in  \cite{kirkpatrick2017overcoming} in terms of both training performance or training time, but EWC does introduce extra communication payload and computation cost in FL. On the contrary, FbFTL achieves five orders of magnitude payload reduction.

Split federated learning (SFL) \cite{thapa2022splitfed} partitions the neural network into two segments: the initial part near the input undergoes training in a federated learning manner using a ``Fed Server", and the subsequent part near the output is trained using split learning with a ``Main Server," which receives the smashed data. Compared to FbFTL, the federated learning segment utilizing the Fed Server operates in the same way as FL and requires significant uplink and downlink payload. Hence, this part alone is typically much more communication-expensive compared to FbFTL. The advantage of SFL lies in its similar performance to FL and not needing a pre-trained model as it updates all parameters. We will present the experimental performance comparison in Table \ref{tab:exp} in Section \ref{sec:exp}. 

Personalized federated learning (PFL) \cite{fallah2020personalized} considers clients with large number of heterogeneous local samples, enabling generalization to the heterogeneous local distribution. However, we in this paper consider a large number of clients with a small set of  local samples from each. As explained in Section \ref{subsec:label_pri} and Section \ref{subsec:exp_pri}, such a setting better protects label privacy. Therefore, personalized federated learning focuses on a different problem from ours. We cannot apply this approach to FbFTL because personalized FbFTL violates the shuffle-batch assumption, unless we train the global model by FbFTL and then retrain locally. Although PFL has better local performance than FL with large heterogeneous batches, it requires several independent local batches and significantly more local computation. Furthermore, it does not allow quantization and sparsification since the weights instead of gradients are uploaded. Therefore, it is comparatively even less communication-efficient when quantization and sparsification are deployed in other methods, as shown in Section \ref{subsec:quantization} and Table \ref{tab:compress}.

\section{Feature-based Federated Transfer Learning}\label{sec:FbFTL}
In this section, we describe the proposed FbFTL framework and demonstrate that it requires a substantially smaller uplink payload compared to FL and FTL. This efficiency arises from the fact that  the extracted features and outputs are uploaded rather than the parameter updates.

\subsection{Learning Structure} \label{subsec:FbFTLLearn}
%\begin{multicols}{2}
%\end{multicols}
\begin{figure}[h]
  \centering
  \includegraphics[width=0.5\textwidth]{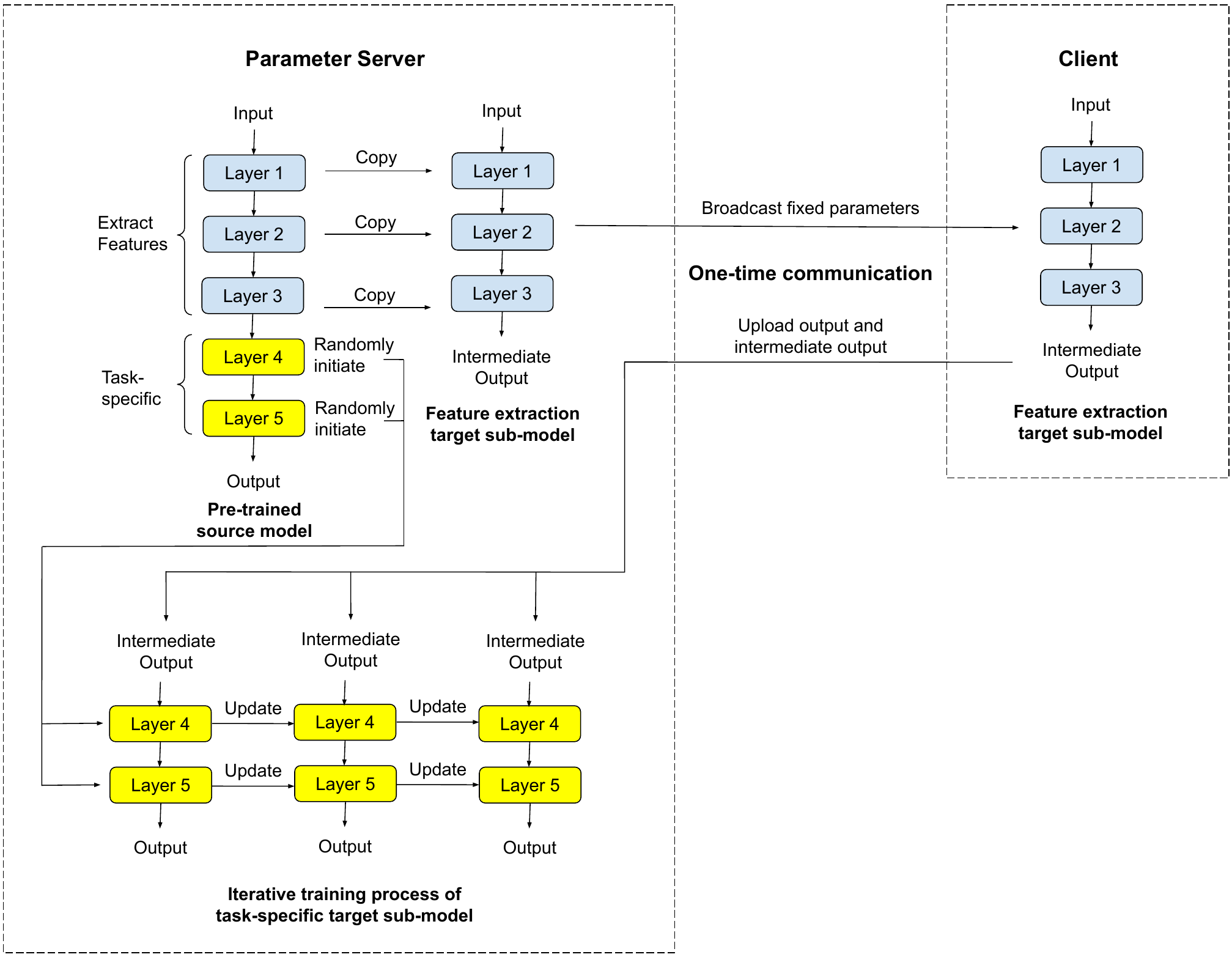}
  \caption{Diagram of the training process of feature-based federated transfer learning, which reduces to one-time communication of the output and the intermediate output (i.e., extracted features).}
  \label{fig:FbFTLdiagram}
\end{figure}
%\begin{multicols}{2}
%\end{multicols}
As depicted in Fig. \ref{fig:FbFTLdiagram}, in FbFTL we consider the model-based TL on a source DNN model $f'$ pre-trained on a different task. We choose one fully connected layer $m_c$ without paralleling path as the cut layer, and divide the new target model $f \leftarrow f'$ into two parts. Those layers before layer $m_c$ (which is layer 4 in Fig. \ref{fig:FbFTLdiagram}, i.e., $m_c = 4$ ) are regarded as the feature extraction sub-model $f^1_{\pmb{\theta}^1}$, and the parameters $\pmb{\theta}^1$ for the new target model $f$ are copied from the pre-trained source model $f'$ and fixed without any further update. The other layers are regarded as the task-specific sub-model $f^2_{\pmb{\theta}^2}$, and all trainable parameters $\pmb{\theta}^2$ in the new target model $f$ are randomly initiated for training with the $u$th client's $k$th training sample $\textbf{s}_{u,k} = \{ \textbf{x}_{u,k}, \textbf{y}_{u,k}\}$ for all $u$ and $k$. 

The forward pass of the full model $f_{\pmb{\theta}}(\textbf{x}_{u,k}) = f^2_{\pmb{\theta}^2}(f^1_{\pmb{\theta}^1}(\textbf{x}_{u,k}))$ maps the input $\textbf{x}_{u,k}$ to the estimated output $\hat{\textbf{y}}_{u,k}$, and we note that the output of the feature extraction sub-model $\textbf{z}_{u,k} = f^1_{\pmb{\theta}^1}(\textbf{x}_{u,k})$ is also the input of the task-specific sub-model $f^2_{\pmb{\theta}^2}$. Note that we require only the task-specific sub-model to be trained. In this setting, each client generates the features $\textbf{z}_{u,k}$ from input $\textbf{x}_{u,k}$ and sends these features to the PS only once. Subsequently, in FbFTL, the PS performs the gradient back-propagation iteratively without sending any feedback to the clients. Contrary to this, in FL and FTL, the parameter update, which is based on the gradients, highly relies on the parameters in the current training iteration, and the same data sample may generate different parameter updates in different iterations within the training process. Therefore, in FL and FTL, we either require many more training samples, or need to upload updates multiple times for the same sample. However in FbFTL, each client only needs to upload the intermediate output $\textbf{z}_{u,k}$ and output $\textbf{y}_{u,k}$ once, instead of iteratively uploading the gradients. The PS may deem these as the input and the output of the training sample for the task-specific sub-model $f^2_{\pmb{\theta}^2}$. Such pairs of samples are not correlated with the model parameters $\pmb{\theta}^2$, and therefore they can be used in different training iterations without downloading or uploading anything again. We provide the steps of the FbFTL algorithm in Algorithm \ref{alg:FbFTL} below.

\begin{algorithm}
% \small	
\caption{FbFTL}
	\label{alg:FbFTL}
	\begin{algorithmic}
	    \State{PS copies fixed sub-model $f^1_{\pmb{\theta}^1}$ from pre-trained source model and randomly initiates trainable sub-model $f^2_{\pmb{\theta}^2}$}.
	    \State{PS broadcasts $f^1_{\pmb{\theta}^1}$ to each client $u \in \mathcal{U}$}
	    \State{\textbf{Clients execute:}}
	    \For{client $u \in \mathcal{U}$}
	        \For{sample $\textbf{s}_{u,k} \in \mathcal{K}_u$}
	            \State {Upload $\textbf{z}_{u,k} = f^1_{\pmb{\theta}^1}(\textbf{x}_{u,k})$ and $\textbf{y}_{u,k}$ to PS}
	        \EndFor
	    \EndFor
	    \State{\textbf{PS executes:}}
	    \While{$not\ converge$}{
	        \For{mini-batch $b$ of pairs $\{u, k\} \in \mathcal{U} \times \mathcal{K}_u$}
	            \State{$\pmb{\theta}^2 \leftarrow \pmb{\theta}^2 - \alpha \sum_{\{u, k\} \in b}{\nabla_{\pmb{\theta}^2}{L(f_{\pmb{\theta}^2} | \textbf{z}_{u,k}, \textbf{y}_{u,k})}}$}
	        \EndFor
	    \EndWhile}
	\end{algorithmic}
\end{algorithm}

As emphasized above, a key distinction of FbFTL is that the training samples of task-specific sub-model are uploaded rather than the direct gradient updates. With this, FbFTL provides additional important benefits that can lead to further improvements in the training performance and efficient management in practice. One of the most important benefits is the significant reduction of packet loss rate given the same batch loss rate since FbFTL has much less data in each batch, as we will illustrate in detail in Section \ref{subsec:packet_loss}. One other benefit is the waiving of the requirement of synchronization between clients, since there is only one iteration in FbFTL. The other benefits pertain to hyper-parameter fine-tuning, dataset balancing, and enabling flexible training batch size selection, as detailed below. 

Specifically, one of the most important additional benefit is that FbFTL enables iterative fine-tuning of the SGD optimizer hyper-parameters such as the learning rate. To obtain the optimized DNN, one needs to find the optimized hyper-parameters that provide the best convergence performance. In FbFTL, the model can be trained from scratch several times at the PS to identify the best hyper-parameter setting without additional communication with the clients. On the other hand, in FL and FTL, the entire online training process has to be run several times, resulting in a much higher cost compared to the ideal uplink payload $P$. 

Another benefit of FbFTL is the dataset balancing. If the overall dataset is imbalanced and hence the samples with certain types of output appears much more frequently than those of other outputs, it is hard for FL and FTL to distinguish this via gradient updates, and such imbalanced data distribution could significantly
degrade FL performance \cite{zhao2018federated, park2019wireless}. However, FbFTL with direct output information enables techniques, such as re-sampling specific classes or merging near-identical classes, to improve dataset imbalance.

One more benefit of FbFTL is to lift the constraint of $UC$ and $K_u$ on the training batch size. To avoid over-fitting to the training dataset, one needs to validate the performance on a separate validation set of data to identify the optimal number of training iterations to stop training and conclude the final model. It is straightforward for FbFTL to divide the obtained dataset into training and validation subsets. However, the gradient-based FL frameworks call for extra effort for the communication system to meticulously perform the training process and validation process with the desired order and number of samples. Additionally for the training process, due to the broadcast nature of the downlink in wireless FL and FTL, all selected clients in the same communication iteration receive the same parameters. Hence, each SGD mini-batch has a larger size than $\sum_{u=1}^{UC} K_u$, and an overwhelmingly large SGD mini-batch size may delay the training process and require more training iterations. The mini-batches in these gradient-based FL frameworks are also on the order of given clients' samples. Therefore, when the clients' sample distribution is biased, we cannot shuffle the samples between iterations and have to accept the loss in the final performance. However, FbFTL may choose any rational size of SGD mini-batch without the constraints of the communication system, and can reshuffle the data in each training iteration.

\subsection{Payload Analysis} \label{subsec:FbFTLpayload}
Note that FbFTL has the major benefit of requiring one-time communication between the clients and the PS. In addition to this, FbFTL has much less data in a single upload batch compared to that of each sample in the upload batch of FL and FTL.  Let us assume that in the fully connected layer $m$ with bias, the number of input nodes and the number of output nodes are denoted by $N^{-}_{m}$ and $N^{+}_{m}$, respectively Then, the number of trainable parameters in this layer is given by $T_m = N^{+}_{m} (N^{-}_{m} + 1)$. Compared to the dimension of $N^{-}_{m_c}$, the amount of information that we need to represent $\textbf{y}_{u,k} \in \{1,\ldots,  N\}$, i.e., $\log_2{N}$ bits, is negligible. Note that in FbFTL, gradients are computed at the PS. Therefore, FedAvg cannot be applied and each sample is required to be uploaded separately. Therefore for FbFTL, the uplink payload for each sample is $d N^{-}_{m_c}$, and the overall successful uplink payload required for training is 
\begin{equation} \label{eq:PayloadFbFTL}
P^{FbFTL} = d \sum_{u = 1}^{U}{K_u} N^{-}_{m_c} \  \text{bits}.
\end{equation}
Compared to the uplink payload in (\ref{eq:PayloadFTLclassify}) of FTL with FedAvg (in which only the task-specific sub-model is updated), the ratio of each upload for single sample between FTL and FbFTL is 
\begin{multline} \label{eq:FTL/FbFTL_single}
\begin{aligned} 
\frac{d \sum_{m=m_c}^{M}{T_m}}{d N^{-}_{m_c}} & = \frac{T_{m_c} + \sum_{m=m_c+1}^{M}{T_m}}{N^{-}_{m_c}} \\
& = \frac{N^{+}_{m_c} (N^{-}_{m_c} + 1) + \sum_{m=m_c+1}^{M}{T_m}}{N^{-}_{m_c}} \\
% & > N^{+}_{m_c} + \frac{\sum_{m=m_c+1}^{M}{T_m}}{N^{-}_{m_c}} \\
& > N^{+}_{m_c} .
\end{aligned}
\end{multline}
Therefore, 
\begin{multline} \label{eq:FTL/FbFTL_all}
\begin{aligned} 
\frac{P^{FTL}_{c}}{P^{FbFTL}} & = \frac{d I^{FTL}_{c} U C \sum_{m=m_c}^{M}{T_m}}{d \sum_{u=1}^{U}{K_u} N^{-}_{m_c}}  > \frac{I^{FTL}_{c} U C}{\sum_{u=1}^{U}{K_u}} N^{+}_{m_c}. \\
\end{aligned}
\end{multline}
The number of extracted features for many state-of-the-art models is larger than $10^3$, and TL usually requires a relatively deeper task-specific sub-model, and therefore $N^{+}_{m_c}$ can be large. For the cross-device federated learning, the clients do not obtain huge local datasets, $\sum_{u=1}^{U}{K_u} < I^{FTL}_{c} U C$. Therefore, we have $P^{FbFTL} \ll P^{FTL}_{c}$. We note that FbFTL and FTL that updates the task-specific sub-model have the same performance at every iteration because the only difference between the two methods is the communication model but not the numerical process to generate the gradient updates. Combining this observation with the conclusion in (\ref{eq:compareFLFTL}), we have 
\begin{equation} \label{eq:compareFLFTLFbFTL}
P^{FbFTL} \ll P^{FTL}_{c} < P^{FTL}_{f} < P^{FL} ,
\end{equation}
and therefore we expect extremely smaller uplink payload in FbFTL compared to FTL and FL.

We also note that FbFTL has the smallest downlink broadcast payload and the least local computation compared to FL and FTL. For FL, FTL that updates all parameters, FTL that updates the task-specific sub-model, and FbFTL, the overall downlink broadcast payloads, respectively, are
\begin{equation} \label{eq:broadcastFL}
D^{FL} = d I^{FL} \sum_{m=1}^{M}{T_m} \  \text{bits},
\end{equation}
\begin{equation} \label{eq:broadcastFTLf}
D^{FTL}_{f} = d I^{FTL}_{f} \sum_{m=1}^{M}{T_m} \  \text{bits},
\end{equation}
\begin{equation} \label{eq:broadcastFTLc}
D^{FTL}_{c} = d I^{FTL}_{c} \sum_{m=1}^{M}{T_m} \  \text{bits},
\end{equation}
\begin{equation} \label{eq:broadcastFbFTL}
D^{FbFTL} = d \sum_{m=1}^{m_c-1}{T_m} \  \text{bits}.
\end{equation}

\subsection{Time Complexity Analysis} \label{subsec:FbFTLcomplex}
We note that FbFTL consumes less computation time and power for training in total, and also transfers a proportion of computation from the client devices to the PS. In FL and FTL, each client must complete one full forward pass and one back-propagation of the parameters to be trained for each training sample at each iteration. However, in FbFTL, each sample is only processed once by the client, and each client only needs to complete the forward pass of feature extraction sub-model, while all other computations are completed at the PS. Such shift of computational load to the PS is particularly beneficial if the end users and devices are severely limited in their computational capabilities and power resources (for instance, in IoT networks) and the PS is equipped with advanced processors and has access to more resources. 

%We note that at the PS, stable electricity supply and advanced processors with much higher efficiency such as graphics processing units (GPUs) can be deployed to facilitate these computations. And, this transfer of computation to the PS makes FbFTL especially advantageous when users and devices have limitations on their computational capabilities and power (e.g., as in IoT devices).

In this subsection, we use time complexity to estimate the computation time and energy consumption. For each fully connected layer $m_1$ with $N^{-}_{m_1}$ input nodes and $N^{+}_{m_1}$ output nodes, there are $T_{m_1} = N^{+}_{m_1} (N^{-}_{m_1} + 1)$ trainable parameters, and each sample requires $O(N^{+}_{m_1} (N^{-}_{m_1} + 1)) = O(T_{m_1})$ time complexity for matrix multiplication during forward pass, and the same time complexity for matrix multiplication during back-propagation. Each 2-dimensional convolutional layer $m_2$ with stride 1 in each direction and same padding has the input shape $\{a^-, b^-, c^-\}$ where $N^{-}_{m_2} = a^- b^- c^-$, kernel shape $\{c^+, a', b', c^-\}$ where $T_{m_2} = c^+ (a' b' c^- + 1)$, and output shape $\{a^-, b^-, c^+\}$ where $N^{+}_{m_2} = a^- b^- c^+$. Thus, each sample requires $O(c^+ (a' b' c^- + 1) a^- b^-) = O(a^- b^- T_{m_2})$ time complexity for matrix multiplication during forward pass, and the same during back-propagation. Comparatively, the time complexity of activation functions, max-pooling layers and dropout layers is negligible. Therefore, we denote the time complexity of each trainable layer $m$ for a single training sample as $O(X_m)$, and the overall time complexity required for all clients as $O(X)$. Specifically, the overall time complexities at clients for the methods of FL, FTL that updates all parameters, FTL that updates the task-specific sub-model, and FbFTL, respectively, are
% \begin{equation} \label{eq:complexFL}
% X^{FL} = 2 I^{FL} U C \sum_{m=1}^{M}{X_m} ,
% \end{equation}
% \begin{equation} \label{eq:complexFTLf}
% X^{FTL}_{f} = 2 I^{FTL}_{f} U C \sum_{m=1}^{M}{X_m} ,
% \end{equation}
% \begin{equation} \label{eq:complexFTLc}
% X^{FTL}_{c} = I^{FTL}_{c} U C \left(\sum_{m=1}^{M}{X_m} + \sum_{m=m_c}^{M}{X_m}\right) ,
% \end{equation}
% \begin{equation} \label{eq:complexFbFTLclient}
% X^{FbFTL} = U \sum_{m=1}^{m_c-1}{X_m} .
% \end{equation}
\begin{equation} \label{eq:complexFL}
O(X^{FL}) %= O\left( 2 I^{FL} U C \sum_{m=1}^{M}{X_m} \right) 
\propto O\left( 2 I^{FL} U C \sum_{m=1}^{M}{X_m} \right) ,
\end{equation}
\begin{equation} \label{eq:complexFTLf}
O(X^{FTL}_{f}) %= O\left( 2 I^{FTL}_{f} U C \sum_{m=1}^{M}{X_m} \right) 
\propto O\left( 2 I^{FTL}_{f} U C \sum_{m=1}^{M}{X_m} \right) ,
\end{equation}
\begin{equation} \label{eq:complexFTLc}
O(X^{FTL}_{c}) %= O\left( I^{FTL}_{c} U C \left(\sum_{m=1}^{M}{X_m} + \sum_{m=m_c}^{M}{X_m}\right) \right) 
% \propto O\left( I^{FTL}_{c} U C \sum_{m=1}^{M}{X_m} \right) ,
= O\left( U \sum_{m=1}^{m_c-1}{X_m} + 2 I^{FTL}_{c} U C \sum_{m=m_c}^{M}{X_m} \right) ,
\end{equation}
\begin{equation} \label{eq:complexFbFTLclient}
O(X^{FbFTL}) \propto O\left( U \sum_{m=1}^{m_c-1}{X_m} \right) ,
\end{equation}
where $\propto$ stands for ``proportional to". Typically, we have $X^{FbFTL} < X^{FTL}_{c} \ll X^{FTL}_{f} < X^{FL}$.

\subsection{Experimental Results}\label{sec:exp}

In this section, we consider the application of FL, FTL and FbFTL to the VGG-16 CNN model \cite{simonyan2014very} and transfer the knowledge learned from ImageNet dataset \cite{deng2009imagenet} to CIFAR-10 dataset \cite{krizhevsky2009learning}.

ImageNet is a vast online database containing over 14 million images, each with hand-annotated labels describing the  classification types or intended outputs for training. The pre-trained source model we use for TL was created on the 2012 ImageNet Large Scale Visual Recognition Challenge (ILSVRC2012, \cite{ILSVRC15}), and the images come from 1000 different categories. As an illustration, we provide 10 samples with their labels in Fig. \ref{fig:imagenet}.
\begin{figure}
	\centering
	\includegraphics[width=1\linewidth]{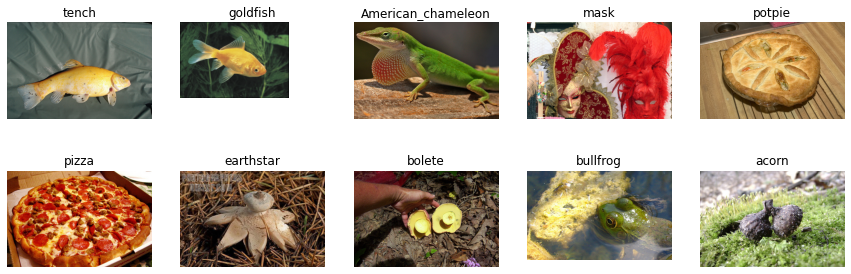}
	\caption{ImageNet samples with labels. }
	\label{fig:imagenet}
\end{figure}

CIFAR-10 is a database of $60000$ images, each with one of $N=10$ distinct labels. Out of these images, $50000$ images are used for training and $10000$  images for testing. Fig. \ref{fig:cifar10} showcases the first ten samples with their labels. Note that these samples have simple labels and appear more blurred compared to those in ImageNet due to their lower resolution/dimension.
%CIFAR-10 is another database of 60000 images with $N = 10$ different labels, and we use 50000 images for training and 10000 images for testing. Fig. \ref{fig:cifar10} shows the first 10 samples with their labels, and we note that these samples have simple labels and naturally seem blurred compared to those in ImageNet because images in CIFAR-10 have a lower dimension/resolution. 
\begin{figure}
	\centering
	\includegraphics[width=0.8\linewidth]{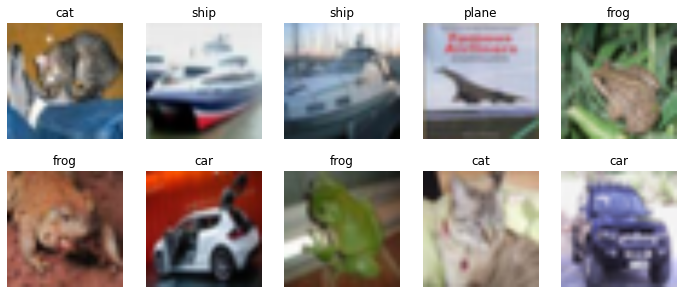}
	\caption{CIFAR-10 samples with labels. }
	\label{fig:cifar10}
\end{figure}

In Fig. \ref{fig:VGG16}, we depict the structure of VGG-16 used for training on CIFAR-10. For TL, we consider the first half of the layers marked blue as the feature extraction sub-model $f^1_{\pmb{\theta}^1}$ and directly transfer this sub-model from that trained on ImageNet. We deem the latter part marked yellow as the task-specific sub-model $f^2_{\pmb{\theta}^2}$ and randomly initiate this part. For FbFTL, the dimension of the intermediate output is $N^{-}_{m_c} = 4096$.
\begin{figure}
	\centering
	\includegraphics[width=0.6\linewidth]{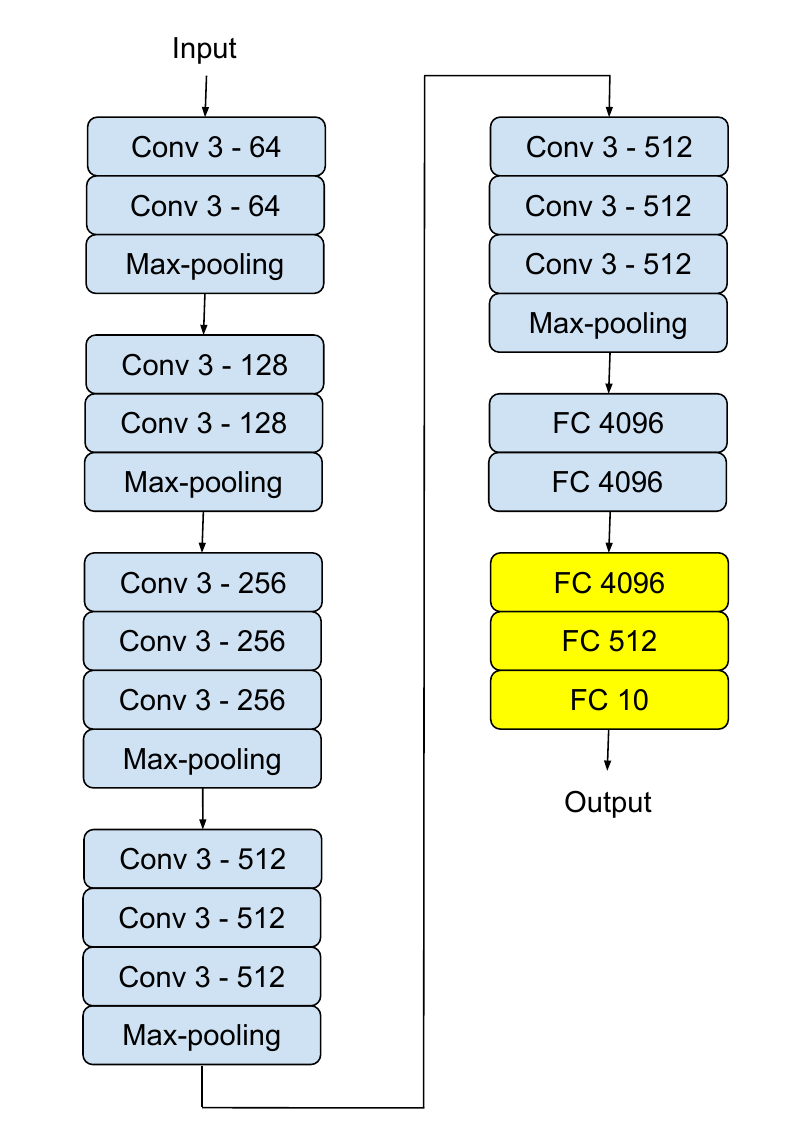}
	\caption{Diagram of the VGG-16 model for training on CIFAR-10 dataset. ``Conv 
 $\{$receptive field size$\}$ - $\{$number of output channels$\}$" depicts the convolutional layers. ``FC $\{$output size$\}$" depicts the fully connected layers. %For brevity, the ReLU activation function and dropout are not shown. 
 }
	\label{fig:VGG16}
\end{figure}

To train the model on CIFAR-10, we utilize Nvidia GeForce GPU with CUDA to run the algorithms with PyTorch \cite{pytorch}. We assume that there are $U=6250$ clients in total, each iteration takes a fraction $C=1.28 \times 10^{-3}$ of all clients, and each batch contains $K_u=8$ samples. In the two most commonly used deep learning tools TensorFlow (including Keras) \cite{tensorflow2015-whitepaper} and PyTorch, the default data type of each number has 32 bits and therefore $d=32$ bits. The learning rate is $10^{-2}$, the momentum of the optimizer is $0.9$, and the L2 penalty is $5 \times 10^{-4}$.

% \begin{table*}[t]
%     \small
% % 	\fontsize{7}{8.5}
% % 	\selectfont
% 	\begin{center}
% 		\begin{tabular}{ | l || l | l | l | l | }
% 			\hline
% 			 & FL & FTL$_f$ & FTL$_c$ & FbFTL \\ \hline\hline
% 			number of upload batches & 656250 & 193750 & 525000 & 50000  \\ \hline
% 			upload parameters per batch & 153144650 & 153144650 & 35665418 & 4096  \\ \hline
% 			uplink payload per batch & \textbf{4.9 Gb} & \textbf{4.9 Gb} & \textbf{1.1 Gb} & \textbf{131 Kb}  \\ \hline
% 			total uplink payload $P$ & \textbf{3216 Tb} & \textbf{949 Tb} & \textbf{599 Tb} & \textbf{6.6 Gb}  \\ \hline
% 			total downlink payload $D$ & 402 Tb & 253 Tb & 322 Tb & 3.8 Gb  \\ \hline
% 			computation time complexity $X$ & $1.63\times10^{17}$ & $4.80\times10^{16}$ & $1.07\times10^{15}$ & $7.74\times10^{14}$ \\ \hline
% 			validation accuracy & 89.42\% & 93.75\% & 86.51\% & 86.51\%  \\ \hline
% 		\end{tabular}
% 		\caption{Performance comparison on VGG-16 between FL, FTL updating full model, FTL updating task-specific sub-model and FbFTL}
% 		\label{tab:exp}
% 	\end{center}
% \end{table*}
\begin{table*}[t]
    % \small
	\footnotesize
	\begin{center}
		\begin{tabular}{| l || l | l | l | l | l | l | l |}
			\hline
			 & FL & SFL & FL$^{low}$ & FTL$_f$ & FTL$_f^{low}$ & FTL$_c$ & FbFTL \\ \hline\hline
			number of upload batches & 656250 & 656250 & 68750 & 193750 & 25000 & 525000 & 50000  \\ \hline
			upload parameters per batch & 153144650 & 117483328 & 153144650 & 153144650 & 153144650 & 35665418 & 4096  \\ \hline
			uplink payload per batch & \textbf{4.9 Gb} & \textbf{3.8 Gb} & \textbf{4.9 Gb} & \textbf{4.9 Gb} & \textbf{4.9 Gb} & \textbf{1.1 Gb} & \textbf{131 Kb}  \\ \hline
			total uplink payload $P$ & \textbf{3216 Tb} & \textbf{2467 Tb} & \textbf{337 Tb} & \textbf{949 Tb} & \textbf{123 Tb} & \textbf{599 Tb} & \textbf{6.6 Gb}  \\ \hline
			total downlink payload $D$ & 402 Tb & 308 Tb & 42 Tb & 253 Tb & 15 Tb & 322 Tb & 3.8 Gb  \\ \hline
			computation time complexity $X$ & $1.63\times10^{17}$ & $1.63\times10^{17}$ & $1.7\times10^{16}$ & $4.80\times10^{16}$ & $6.19\times10^{15}$ & $1.07\times10^{15}$ & $7.74\times10^{14}$ \\ \hline
			validation accuracy & 89.42\% & 89.42\% & 86.64\% & 93.75\% & 86.45\% & 86.51\% & 86.51\%  \\ \hline
		\end{tabular}
		\caption{Performance comparison on VGG-16 between FL, SFL, FTL updating full model (FTL$_f$), FTL updating task-specific sub-model (FTL$_c$) and FbFTL. Additionally, we compare with cases in which FL and FTL$_f$ achieve the same accuracy as FTL$_c$ and FbFTL ($\approx 86\%$) by terminating training slightly earlier. Algorithms in these cases are referred to as FL$^{low}$ and FTL$_f^{low}$. }
		\label{tab:exp}
	\end{center}
\end{table*}
% $P^{FL} = d I^{FL} U C \sum_{m=1}^{M}{T_m} \  \text{bits}$
% $P^{FTL}_{f} = d I^{FTL}_{f} U C \sum_{m=1}^{M}{T_m} \  \text{bits}$
% $P^{FTL}_{c} = d I^{FTL}_{c} U C \sum_{m=m_c}^{M}{T_m} \  \text{bits}$
% $P^{FbFTL} = d \sum_{u}^{U}{K_u} N^{-}_{m_c} \  \text{bits}$
% $D^{FL} = d I^{FL} \sum_{m=1}^{M}{T_m} \  \text{bits}$
% $D^{FTL}_{f} = d I^{FTL}_{f} \sum_{m=1}^{M}{T_m} \  \text{bits}$
% $D^{FTL}_{c} = d I^{FTL}_{c} \sum_{m=1}^{M}{T_m} \  \text{bits}$
% $D^{FbFTL} = d \sum_{m=1}^{m_c-1}{T_m} \  \text{bits}$
% FL: 89.1%, 105 episode (89.42, 21)
% FTL f: 91.68%, 31 episode (93.75, 10)
% FTL c, FbFTL: 85.59, 84 episode (86.51, 53)
% FL complexity: 1.627354e+17; FTLf complexity: 4.804569e+16; FTLc complexity: 1.073576e+15; 
% FbFTL complexity at client: 7.739862e+14; FbFTL complexity at PS: 2.995895e+14

In Table \ref{tab:exp}, we compare the performances of FL, SFL, FTL updating the full model (FTL$_f$), FTL updating the task-specific sub-model (FTL$_c$), and FbFTL. For increased fairness in the comparison of payloads, we further demonstrate the performances  FL$^{low}$ and FTL$_f^{low}$, which are the FL and FTL$_f$ algorithms that terminate training process when the validation accuracy reaches those of FTL$_c$ and FbFTL (i.e., $\approx 86\%$). As we have analyzed, the other algorithms require $IUC$ successfully uploaded batches, while FbFTL only requires $\sum_{u=1}^{U}{K_u}$ batches. 
%Also, the first three algorithms require uploading $\sum_{m=1}^{M}{T_m}$, $\sum_{m=1}^{M}{T_m}$, and $\sum_{m=m_c}^{M}{T_m}$ parameters, respectively, in each batch, while FbFTL only requires uploading $N^{-}_{m_c}$ in each batch. 
Also, FL, SFL, FTL$_f$ and FTL$_c$ require uploading $d \sum_{m=1}^{M}{T_m}$ bits, $d (K_u N^{-}_{m_c} + \sum_{m=m_c}^{M}{T_m})$ bits, $d \sum_{m=1}^{M}{T_m}$ bits, and $d \sum_{m=m_c}^{M}{T_m}$ bits, respectively, for each batch, while FbFTL only requires uploading $d N^{-}_{m_c}$ bits for each batch. 
In the third row of the table, we observe that the FbFTL algorithm significantly reduces the uplink payload per batch by four orders of magnitude (i.e. a factor of $10^{-4}$) compared to the other algorithms for each client. Additionally, in the fourth row, FbFTL leads to a reduction of five orders of magnitude (i.e. a factor of $10^{-5}$) in the total uplink payload during training when compared to the other algorithms.
These results demonstrate that FbFTL is apparently the most efficient scheme. FbFTL also results in a substantial decrease in the overall downlink payload. If larger models are trained for more complex tasks or if the size of the training dataset is more limited, the difference in payload could be even greater.
%There is also a huge reduction on the overall downlink payload by applying FbFTL. If we are to train a larger model on a more complex task or the size of training dataset is more restricted, the difference between the payload can be even larger. 
In (\ref{eq:complexFL}) through (\ref{eq:complexFbFTLclient}), we have described the overall computation time complexity required for training at clients as $O(X)$. In the second-to-last row of the table, we quantify and provide this time complexity for each method by counting the number of multiplications among floating numbers. We readily observe that FbFTL requires the least computation time and power consumption at the clients, and has two orders of magnitude reduction compared to FL, SFL and FTL$_f$, since it only runs the forward pass over feature extraction sub-model for each sample one time. We further note that for FbFTL, the computation complexity at the PS is also low with $X=3.00\times10^{14}$. Furthermore, we note that different number of clients $CU$ in each iteration does not affect the performance of FbFTL, but a large number of clients $CU$ reduces the performance of FL and FTL, since it defines the minimum ``training batch size" $CU K_u$. The performance of FL and FTL will decrease with an overwhelmingly large training batch size, which is another benefit of FbFTL. We in the experiment pick a small $CU=8$ so that the benchmark schemes (i.e., FL, FTL$_f$, and FTL$_c$) have the best performance (at which point  FTL$_c$ and FbFTL have the same performance). Moreover, we also observe that FTL$_c$ and FbFTL only update a small portion of the parameters and exhibit a slight decrease in validation accuracy, which is the trade-off for the reduced payload. However, we will show in Section \ref{sec:robust} and Section \ref{sec:privacy} that under communication efficiency or privacy constraints, FbFTL may also prevail in terms of validation accuracy in certain situations.

\begin{table*}[t]
    \small
% 	\fontsize{7}{8.5}
% 	\selectfont
	\begin{center}
		\begin{tabular}{ | l || l | l | l | l | l | l | l | }
			\hline
			 & FTL$_f$ & FTL$_c$ & FbFTL & FTL$_c$ & FbFTL & FTL$_c$ & FbFTL \\ \hline\hline
            number of trained encoders & 8 & 8 & 8 & 4 & 4 & 2 & 2    \\ \hline
			number of upload batches & 132588 & 36830 & 7366 & 88392 & 7366 & 103124 & 7366 \\ \hline
			upload parameters per batch & 109860224 & 60511616 & 1024 & 51070144 & 1024 & 46349504 & 1024  \\ \hline
			uplink payload per batch & \textbf{3.5 Gb} & \textbf{1.9 Gb} & \textbf{32.7 Kb} & \textbf{1.6 Gb} & \textbf{32.7 Kb} & \textbf{1.5 Gb} & \textbf{32.7 Kb}  \\ \hline
			total uplink payload $P$ & \textbf{466.1 Tb} & \textbf{71.3 Tb} & \textbf{241.4 Mb} & \textbf{144.5 Tb} & \textbf{241.4 Mb} & \textbf{152.9 Tb} & \textbf{241.4 Mb}  \\ \hline
			total downlink payload $D$ & 116.0 Tb & 32.2 Tb & 1.58 Gb & 77.3 Tb & 1.88 Gb & 90.2 Tb & 2.03 Gb \\ \hline
			% computation time complexity $X$ & $1.63\times10^{17}$ & $4.80\times10^{16}$ & $1.07\times10^{15}$ & $7.74\times10^{14}$ &&& \\ \hline
			validation ROUGE-1 & 45.9249 & 45.4680 & 45.4680 & 45.2827 & 45.2827 & 44.9862 & 44.9862 \\ \hline
		\end{tabular}
		\caption{Performance comparison on FLAN-T5-small between FTL$_f$ updating full model, FTL$_c$ updating task-specific sub-model and FbFTL, with different number of encoders trained.}
		\label{tab:exp_LLM}
	\end{center}
\end{table*}

Furthermore, we note that in transfer learning, there is an additional trade-off between privacy protection, performance and payload. When partitioning a model into feature extraction sub-model and task-specific sub-model, choosing the cut layer closer to the output better preserves data privacy, while picking a cut layer closer to the input improves the training performance. In order to demonstrate such a trade-off, we next consider a language model as our application scenario. In FL, FTL, and FbFTL on natural language processing tasks, we consider tasks where the model and the intermediate features are not proprietary or private, and thus we assume that the clients are willing to share them while keeping the local data private As an example, we show the results on a conversation summary task SAMSum \cite{gliwa-etal-2019-samsum} with 32128 distinct token types out of 14732 training dialogues and 819 testing dialogues. For instance, dialogue ID 13728867 in SAMSum is ``Olivia: Who are you voting for in this election? Oliver: Liberals as always. Olivia: Me too!! Oliver: Great", and ground truth summary is ``Olivia and Olivier are voting for liberals in this election." For this task, we deploy a language model FLAN-T5-small \cite{chung2022scaling}, which is a transformer with 110 million parameters, including 8 encoders and 8 decoders. This model is pre-trained and the dataset is relatively small, so we do not have FL in this case. We assume that there are $U = 7366$ clients, each has $K_u = 2$ dialogues, and each iteration takes a fraction $C = 5.43 \times 10^{-4}$ of all clients. Our experiment is based on HuggingFace \cite{wolf2020huggingfaces} with learning rate $2 \times 10^{-4}$. 

In Table \ref{tab:exp_LLM}, we compare the performances of FTL and FbFTL in terms of ROUGE-1 score, which measures the match between the generated text and reference text. A larger ROUGE-1 indicates better performance. We note that FTL$_f$ trains all components including encoders, decoders, embedding and the final linear layer. On the other hand, FTL$_c$ and FbFTL freeze embedding, and may or may not freeze several encoders close to the input prompts. The number of trained encoders is given in the second row of Table \ref{tab:exp_LLM}. For instance, the performance results in the third and fourth columns are for FTL$_c$ and FbFTL that have trained all encoders (and hence have not frozen any of them), while the other columns provide the performances with 4 or 2 encoders trained (indicating that 4 or 6 encoders close to the input are frozen). We do not need to freeze decoders since label privacy leakage converges to zero with shuffled batches, as will be shown in Section \ref{sec:privacy}. In Table  \ref{tab:exp_LLM}, we observe that FbFTL reduces the uplink and downlink payload by similar orders of magnitude as in the VGG-16 experiment. %(We omit the computation time complexity since encoders and decoders have complex architecture.) 
Furthermore, we notice that training less layers or encoders leads to a slight reduction in ROUGE-1 score but it does not guarantee lower payload. While FTL  may require more iterations to arrive convergence, FbFTL may need to broadcast a larger feature extraction sub-model.

\section{Robustness Analysis}\label{sec:robust}
In this section, we compare the performance of FbFTL with that of FL and FTL under the same packet loss rate (PLR) for each batch being uploaded, and illustrate the robustness of FbFTL against packet losses, data insufficiency, and compression, including quantization, sparsification and error feedback. 

\subsection{Packet Loss} \label{subsec:packet_loss}
As we have demonstrated in the previous section, gradient-based FedAvg FL frameworks including FL and FTL upload the gradient update $\textbf{g}_u = \sum_{k=1}^{K_u}{\nabla_{\pmb{\theta}}{L(f_{\pmb{\theta}} | \textbf{s}_{u,k})}}$ iteratively, where each batch contains the gradient summation from all samples of the client. In contrast, our proposed FbFTL uploads the data for each sample only once, and each batch contains extracted features $\textbf{z}_{u,k}$ and output $\textbf{y}_{u,k}$ from one sample. 

In (\ref{eq:FTL/FbFTL_single}), we have shown that each batch for gradient-based FL is much larger than each batch for FbFTL (by about $10^4$ larger in our experiments), and we assume that the packets to transmit both types of batches consist of multiple transmission blocks of the same size. For both types of packets, we consider measuring the robustness of all frameworks against packet loss caused by block losses due to network congestion or link outage (e.g., as a result of deep fading in wireless networks). For each learning framework whose packet consists of $n_b$ transmission blocks, we consider the same block loss rate (BLR). The PLR without retransmission is
\begin{equation} \label{eq:PLR_one_time}
\text{PLR} = 1 - (1 - \text{BLR})^{n_b}.
\end{equation}
Obviously, PLR highly depends on the value of $n_b$. FbFTL has significantly lower packet size and consequently we expect much lower PLR for the given same BLR. We  show the significant performance difference among different learning frameworks in our experiments at the end of this section.

If, on the other hand, we allow at most $n_r$ retransmissions for all packets and assume the channel state of each transmission to be independent and identically distributed (i.i.d.), we can lower the PLR and achieve similar packet-level reliability. However, this is realized at the cost of higher total uplink payload. Specifically for each learning framework requiring uplink payload $P$, 
%when $\text{BLR}=0$,
the expected uplink payload with retransmissions is 
\begin{equation} \label{eq:PLR_retrans}
P(\text{BLR}) = \sum_{n'=1}^{n_b} \frac{P}{n_b} \sum_{n=0}^{n_r} (\text{BLR})^{n} = P \sum_{n=0}^{n_r} (\text{BLR})^n  .
\end{equation}
We note that $\sum_{n=1}^{n_r} (\text{BLR})^{n}$ is the expected number of retransmissions, which is the same for all different learning frameworks. If the transmission has a fixed bandwidth, $\sum_{n=1}^{n_r} (\text{BLR})^{n}$ also represents the delay factor of the total transmission. We further note that if $\text{BLR} = 0$, then $P(\text{BLR}) = P$. Otherwise, $P(\text{BLR}) = P +  P \sum_{n=1}^{n_r} (\text{BLR})^{n}> P$. Notice that the increase in the payload $P \sum_{n=1}^{n_r} (\text{BLR})^{n}$ grows with $P$ and $n_r$. Hence, learning frameworks with higher payload experience a higher increase in payload when retransmissions are introduced.  

% we test the performance under the same packet loss rate without retransmission. For gradient-based FL frameworks, a higher packet loss rate may lead to a larger number of iteration $I$ and subsequently higher payload, but may not lead to much performance loss at convergence especially when the channel condition of each client is i.i.d.. For FbFTL, a higher packet loss rate without retransmission leads to the same overall payload during training. According to (citation), 5G and 6G have a peak packet loss rate at around 10\%. For most deep learning tasks, such a reduction in training data has negligible impact on the final performance, and we will also show the simulation result by the end of this section. 

\subsection{Data Insufficiency} \label{subsec:insuf_data}
At the end of Section \ref{subsec:FbFTLLearn}, we have discussed the benefit that FbFTL does not require additional online uploads from the clients to test different sets of hyper-parameters, while gradient-based frameworks need to run the entire uploading process multiple times. In practice, in addition to hyper-parameters, another intangible aspect prior to training is whether there exists a sufficient number of participating clients and training samples. If the planned set of samples is insufficient to train the neural network, gradient-based frameworks require rerunning the overall process with more clients, which leads to high consumption and potential waste of both computational and transmission resources. However, FbFTL only requires the new clients to compute and upload their batches, and hence there is no waste of transmission resources. %The correspondence between the data size, task and neural network is notoriously hard to theoretically analysis, and we will show the  

\subsection{Quantization, Sparsification and Error Feedback} \label{subsec:quantization}
To further reduce the uplink payload of gradient-based frameworks, there have been extensive works on gradient compression, including gradient sparsification \cite{alistarh2018convergence, stich2018sparsified} and gradient quantization \cite{reisizadeh2020fedpaq, shlezinger2020federated}, especially signSGD, the extreme case in which each element is reduced to be binary valued without scaling \cite{bernstein2018signsgd}. Several recent studies utilize error feedback (or quantization and sparsification with memory) that reduces the error in compression at client's local device to improve the updates in future iterations \cite{richtarik2021ef21, basu2019qsparse}. 

For gradient-based frameworks utilizing gradient update $\textbf{g}_u$ with $X_g$ elements, we denote the sparsification function as $\mathcal{S}_r(\cdot): \mathbb{R}^{X_g} \rightarrow \mathbb{R}^{X_g}$, where $r \in (0,1]$. This function keeps the value of $r X_g$ elements in the vector with the highest absolute values, and set the value of all other elements to 0. We denote the quantization function as $\mathcal{Q}_q(\cdot): \mathbb{R}^{X_g} \rightarrow \mathbb{R}^{X_g}$ where $q \in \mathbb{Z}^+$, and this function quantizes each element in the vector to $q$ bits within the max/min range. %while keeping the magnitude $\max(\cdot)$ and $\min(\cdot)$.
Furthermore, we denote the error memory vector for client $u$ at iteration $t$ as $\textbf{m}_{u,t} \sim \mathbb{R}^{X_g}$. Therefore, the process of quantization and sparsification with error feedback at iteration $t$ is described as follows:
\begin{equation} \label{eq:errfdbk}
\textbf{g}'_u = \mathcal{Q}_q(\mathcal{S}_r(\textbf{g}_u + \textbf{m}_{u,t})) , 
\end{equation}
\begin{equation} \label{eq:errfdbk2}
\textbf{m}_{u,t+1} = \textbf{g}_u + \textbf{m}_{u,t} - \textbf{g}'_u , 
\end{equation}
where $\textbf{g}'_u$ is uploaded to the PS, and the local error memory vector is updated to $\textbf{m}_{u,t+1}$. For initialization, $\textbf{m}_{u,1} = \textbf{0}$. 

Similarly, for FbFTL utilizing extracted feature $\textbf{z}_{u,k}$ with $X_z$ elements, we may also apply sparsification and quantization. However, error feedback is not applicable, because there is only one upload iteration in FbFTL, and the extracted feature is not additive unlike the gradient. Therefore, the process of quantization and sparsification for FbFTL is described as follows:
\begin{equation} \label{eq:qsFbFTL}
\textbf{z}'_{u,k} = \mathcal{Q}_q(\mathcal{S}_r(\textbf{z}_{u,k})) , 
\end{equation}
where $\textbf{z}'_{u,k}$ is uploaded to the PS.

In both cases, the compression rate is close to $2^{d/q}/r$ where $d$ is the number of bits for the representation of the original data type. We note that there is also a potential drawback in the practical deployment of sparsification and error feedback. On the one hand, while all other operations including training, communication, inference, quantization and error feedback have time complexity no more than $O(X')$ where $X'$ is the total number of parameters in the neural network, we notice that depending on the sparsification ratio $r$, sparsification requires up to $O(X_g\log{X_g})$ time complexity for gradient-based frameworks and $O(X_z\log{X_z})$ time complexity for FbFTL. Typically, we have $O(X_z) \ll O(X_g) = O(X')$, and FbFTL can achieve more significant time complexity reduction in local sparsification computation compared to gradient-based frameworks. On the other hand, by compensating for the compression error in future iterations, the error feedback significantly improves the training performance when the compression rate is high, i.e., $r$ and $q$ are low. However, the error feedback requires $O(X^g)$ additional memory throughout the entire training process and not just during the local training. Also, we demonstrate below in the experimental results that the gradient-based frameworks do not prevail even with error feedback.

% Given the well-trained feature extraction sub-model, we can also quantize the features of FbFTL to achieve lower uplink payload. With $n_q$ quantization bits, each float number of the feature vector $\textbf{z}$ is quantized to an integer
% \begin{equation} \label{eq:quantize}
% \textbf{z}^i[n] = \left\lfloor (2^{n_q}-1) \frac{\textbf{z}[n] - \min(\textbf{z})}{\max(\textbf{z}) - \min(\textbf{z})} \right\rceil ,
% \end{equation}
% where $\lfloor \cdot \rceil$ denotes rounding to the nearest integer. Then $\textbf{z}^i$ is transmitted together with $n_q$, $\max(\textbf{z})$ and $\min(\textbf{z})$. The PS will decode each number as 
% \begin{equation} \label{eq:un_quantize}
% \textbf{z}'[n] = \textbf{z}^i[n] \frac{\max(\textbf{z}) - \min(\textbf{z})}{2^{n_q}-1}  + \min(\textbf{z}) .
% \end{equation}
% By quantization with $n_q$ bits, the compression rate is close to $2^{d/n_q}$ where $d$ is the number of bits for the original data type, since the payload of $n_q$, $\max(\textbf{z})$ and $\min(\textbf{z})$ is negligible comparing to $\textbf{z}'$. 

\subsection{Experimental Results} \label{subsec:exp_robust}
In this section, we numerically demonstrate the aforementioned robustness metrics in our experiments with the VGG-16 CNN model on transfer learning from ImageNet dataset to CIFAR-10 dataset.

In Fig. \ref{fig:PLR}, we show the PLR (without retransmissions) of different learning frameworks when the block size equals the batch size of FbFTL (i.e., the BLR equals the PLR of FbFTL). As we have shown in the row of \emph{uplink payload per batch} of Table \ref{tab:exp}, the batch sizes of other learning frameworks are about $10^4$ times larger than that of FbFTL. Due to the huge difference in the batch sizes, the PLR curves of the other learning frameworks almost reach $1$ when the PLR of FbFTL is less than $0.001$ (or equivalently when BLR$<0.001$). The difference is still substantially high if a few rounds of retransmissions are allowed in gradient-based frameworks. Fig. \ref{fig:PLR_part} provides the magnified plot of Fig. \ref{fig:PLR} for BLR$<0.001$, and we note that the curve of FL and the curve of FTL that updates full model overlap since they have the same number of parameters to update and therefore have the same batch size. In this figure, we again observe that the PLR curves of learning mechanisms other than FbFTL quickly approach $1$ within the considered range of BLR $<0.001$, while the PLR of FbFTL (being equal to BLR) stays below $0.001$. 

\begin{figure}[ht]
\centering
\begin{minipage}[t]{0.47\linewidth}
\includegraphics[width=1.0\linewidth]{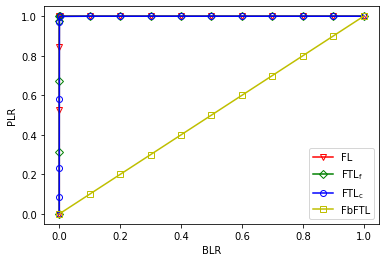}
\caption{Comparison of PLR (without retransmission) among FL, FTL updating full model, FTL updating task-specific sub-model, and FbFTL.}
\label{fig:PLR}
\end{minipage}
\quad
\begin{minipage}[t]{0.47\linewidth}
\includegraphics[width=1.0\linewidth]{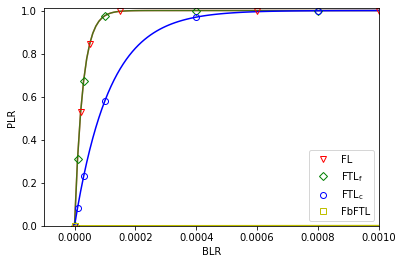}
\caption{Magnification of Fig. \ref{fig:PLR} for BLR$<$0.1\%.}
\label{fig:PLR_part}
\end{minipage}
\end{figure}

Since FbFTL has significantly lower PLR, we subsequently analyze its performance with different amount of data. In Fig. \ref{fig:insuf_data}, we see that FbFTL has relatively high validation accuracy even with only 10\% of the samples. Indeed, when we have only $0.1\%$ of the samples (i.e., the case with 50 samples), the accuracy is $82.5\%$ even with PLR$=0.5$. Furthermore, we observe that as PLR increases, the accuracy curves remain relatively flat, and experience sharp drops only when PLR approaches 1. Therefore, FbFTL is considerably robust against data insufficiency and PLR. We note that such robustness requires significantly more training iterations $I'$ such that $IU$ and $I'U'$ have the same order of magnitude, where $I$ is the original number of training iterations with all $U$ participating clients, and $I'$ is the number of training iterations with limited data from $U'$ clients. In the case of smaller number of participating clients $U' \ll U$ and PLR $=0$, FbFTL requires the same level of computational power consumption and more total downlink payload compared to the original case with $U$ clients. However, there is a huge reduction in total uplink payload and total local computational power consumption, because FbFTL only requires uploading once for each participating client. In comparison, gradient-based frameworks require uploading in every training iteration, and therefore there is no such benefit in terms of reduced uplink payload and local computations, while they also suffer from higher downlink payload. Similarly, we have the performance curves for FLAN-T5-small in Fig. \ref{fig:insuf_data_FLANT5}.
\begin{figure}
	\centering
	\includegraphics[width=0.7\linewidth]{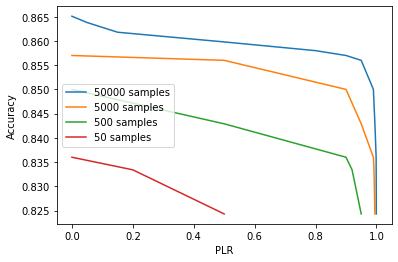}
	\caption{Validation accuracy of FbFTL with different number of participating samples and PLRs. }
	\label{fig:insuf_data}
\end{figure}
\begin{figure}
	\centering
	\includegraphics[width=0.7\linewidth]{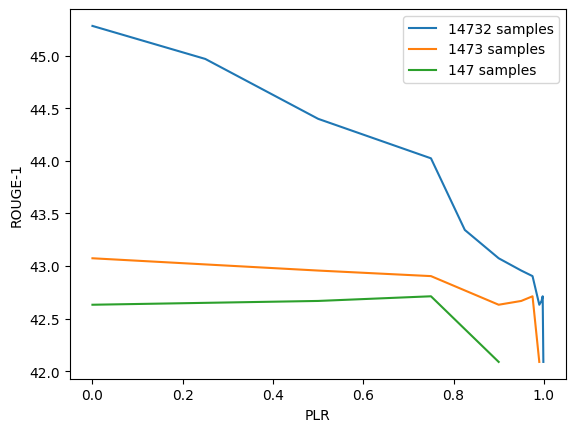}
	\caption{ROUGE-1 score of FbFTL with different number of participating samples and PLRs on FLAN-T5-small with SAMSum.}
	\label{fig:insuf_data_FLANT5}
\end{figure}

In Table \ref{tab:compress}, we show the validation accuracy of each framework with data from all $U$ clients and PLR $= 0$, but for different values of quantization size $q$ bits and sparsification ratio $r$. In the experiment, we pick the same set of values for $q$ and $r$ as in \cite{alistarh2018convergence, stich2018sparsified}. 
Note that the highest reduction in uplink payload is achieved when we set $r=0.001$ and $q=2$. However, even in this case, the uplink payload of gradient-based frameworks (i.e., FTL, FTL$_f$, FTL$_c$) is still greater than that of FbFTL with $r=1$ and $q=32$. Moreover, with such drastic reduction, gradient-based schemes achieve lower accuracies compared to that of FbFTL with $r=1$ and $q=32$. Therefore,  even without sparsification and more restrictive quantization (e.g., $q=8$ or $q=2$), FbFTL outperforms gradient-based frameworks in terms of both accuracy and payload reduction. We can further reduce the uplink payload of FbFTL by choosing $r < 1$ and $q < 32$. We note that FbFTL in the extreme case of $r=0.001$ only keeps 4 elements in the extracted features to distinguish among 10 classes. In this setting,  the information is severely limited and is insufficient to make an accurate prediction, resulting in accuracy levels of $65\%$ for FbFTL. If, on the other hand, we pick a certain threshold on the validation accuracy, for instance 82\%, FbFTL requires the sparsification ratio to be no lower than $r=0.01$ (which is $10$ times more than that of the best gradient-based framework), but still maintains a reduction of more than four orders of magnitude (i.e., reduction of $10^{-4}$) in total uplink payload compared to the gradient-based algorithms while achieving the same validation accuracy. Such good performance of FbFTL without error feedback is due to the robustness of extracted features against noise. If we consider the error from compression as random noise, the extracted features from a well-trained source model is typically robust to noise while gradient update does not necessarily have the same level of robustness. %Hence, FbFTL without error feedback is not extensively affected compared to the other frameworks with error feedback as we observe in the experimental results.

\begin{table}
    % \small
	\begin{center}
		\begin{tabular}{ | p{10mm} || p{14mm} | p{14mm} | p{14mm} | p{15mm} | } \hline
		FL \newline FTL$_f$ \newline FTL$_c$ \newline FbFTL  & $r=1$  & $r=0.1$ & $r=0.01$ & $r=0.001$  \\ \hline\hline
		$q=32$  &  89.42\% \newline 93.75\% \newline 86.51\% \newline 86.51\%  &  83.96\% \newline 93.46\% \newline 85.45\% \newline 86.47\%  &  59.97\% \newline 90.98\% \newline 84.49\% \newline 82.10\%  &  42.26\% \newline 81.98\% \newline 82.78\% \newline 65.71\%  \\ \hline
		$q=8$  &  88.46\% \newline 93.36\% \newline 86.16\% \newline 86.41\%  &  82.34\% \newline 93.34\% \newline 85.44\% \newline 86.39\%  &  57.04\% \newline 90.83\% \newline 84.41\% \newline 81.85\%  &  39.34\% \newline 81.91\% \newline 82.53\% \newline 65.55\%  \\ \hline
		$q=2$  &  45.96\% \newline 88.20\% \newline 81.44\% \newline 85.50\%  &  44.67\% \newline 88.16\% \newline 81.29\% \newline 85.48\%  &  33.63\% \newline 86.19\% \newline 80.91\% \newline 80.99\%  &  31.91\% \newline 80.15\% \newline 78.61\% \newline 65.18\%  \\ \hline
		\end{tabular}
    	\caption{Validation accuracy comparison between FL, FTL updating full model, FTL updating task-specific sub-model and FbFTL with quantization to $q$ bits, sparsification ratio $r$ of elements, and error feedback (except for FbFTL).}
    	\label{tab:compress}
	\end{center}
\end{table}

% In Fig. \ref{fig:quantization}, we show the validation accuracy of FbFTL with all data, PLR = 0, but different values of quantization bit $n_q$. Although the performance is severely restricted at $n_q=1$, it is barely affected when $n_q \ge 2$ and quantization achieves significant compression rate. %We note the extreme case where $n_q=1$ is almost equivalent to signSGD in terms of compression rate.
% \begin{figure}
% 	\centering
% 	\includegraphics[width=0.6\linewidth]{}
% 	\caption{Validation accuracy of FbFTL with different values of quantization bit $n_q$. The turning point is at $n_q=2$.}
% 	\label{fig:quantization}
% \end{figure}

\section{Privacy Analysis}\label{sec:privacy}
In previous sections, we have described the FbFTL framework where each client $u$ with $K_u$ samples uploads extracted features $\textbf{z}_{u,k}$ and output $\textbf{y}_{u,k}$ for $k = 1,\ldots, K_u$  instead of the gradient update $\textbf{g}_u = \sum_{k=1}^{K_u}{\nabla_{\pmb{\theta}}{L(f_{\pmb{\theta}} | \textbf{s}_{u,k})}}$ as done in FL and FTL with FedAvg. In this section, %we comprehensively evaluate the privacy protection in each case. In particular, 
we conduct a privacy analysis by studying privacy leakage to a potential adversary, and propose protection strategies.  In particular, we consider leakage due to the unveiling of information regarding the outputs/labels $\{\textbf{y}_{u,k}\}$ (henceforth referred to as label privacy leakage) and the unveiling of intermediate features $\{\textbf{z}_{u,k}\}$ (henceforth referred to as feature privacy leakage). For example, during the training process of a classifier to distinguish the character in a photo to be a dog or a cat, the identity of the character being labeled as a dog is considered as label privacy, while the feature privacy includes additional information of the certain photo besides its label privacy such as the color of the fur and the furniture in the background. 

The label privacy leakage quantifies how much information about the batch of a targeted client is revealed to an adversary through the information on the outputs $\{\textbf{y}_{u,k}\}$ (or type of label in classification problems). The information on the outputs/labels can be of statistical nature or can be obtained via the unveiling of the outputs to the adversary. The former considers the information leakage via the knowledge of the general sample output distribution, while the latter specifies the leakage when the adversary has access to the output/label values. We analyze the label privacy leakage via the entropy and mutual information from the adversary's perspective and propose an uploading design that randomly shuffles all batches to conceal the dependency between the client address and the output data. 

Feature privacy leakage 
%evaluates the information that specifically correspond to sample $\textbf{s}_{u,k}$ and is not necessary to identify the output $\textbf{y}_{u,k}$. Similarly, we divide it into mutual instance privacy and local instance privacy. 
describes the amount of information that possibly leaks when the adversary obtains uploaded contents from clients. 
%We note that both specific information leakage beyond label privacy, no matter shared among samples with different outputs or only among that with the same output, can not be completely eliminated by random shuffling. Instead of information theory approach, 
We  will analyze the feature privacy leakage via the differential privacy (DP) framework \cite{wei2020federated} and provide comparisons between FbFTL and the gradient-based frameworks (FL and FTL, in which cases feature privacy is leaked when the adversary obtains the gradient updates) through experiments and numerical results.

\subsection{Label Privacy Leakage} \label{subsec:label_pri}
First, we analyze the label privacy leakage. While FbFTL directly uploads the output in each batch for each sample, we note that FL and FTL with FedAvg that update the final layer also leak the output, and hence label privacy leakage also occurs in these cases. According to \cite{wainakh2022user}, the count of each output type in a batch can be numerically solved given the average gradient. Adversaries with certain prior knowledge on the training data are able to gain further knowledge and reconstruct the input from the average gradient, such as deep leakage \cite{zhu2019deep} or gradient inversion \cite{jeon2021gradient, huang2021evaluating}. Although the following label privacy analysis applies to both feature-based FbFTL and gradient-based FedAvg frameworks, we in label privacy analysis use the word ``batch" to indicate all uploaded information from one client with multiple samples. Compared to the high-dimensional gradient $\textbf{g}_u$ or feature $\textbf{z}_{u,k}$, the output $\textbf{y}$ also potentially reveals clients' private information but up to a certain degree. In this setting, we analyze the conditions under which the label privacy leakage (with statistical information) vanishes. We also address the role of shuffling the output information from all clients as a way of hiding the client's address from uploaded content when adversary has access to outputs.

To validate these approaches, we analyze the private output information leakage of a specific batch to a potential adversary without client addresses. According to \cite{longpr2017entropy}, the privacy loss of a query can be evaluated as the difference in the privacy before and after the query. In our setting, we determine the amount of label privacy via the uncertainty from the adversary's perspective, and quantify it by utilizing the entropy formulation. %Subsequently, we in this section conduct the analysis on how much the random shuffling of batches from a large number of clients reduces the amount of local label privacy loss.

As shown in Fig. \ref{fig:batch}, we consider a  common learning task in which the output $\textbf{y} \in [0,1]^N$ is an axis-aligned unit vector with one element having a value of 1 indicating the label associated with the given input, and all others equal to 0. We assume that each sample $\{\textbf{x}_{u,k}, \textbf{z}_{u,k}, \textbf{y}_{u,k}\}$ is independent and identically distributed (i.i.d.). Each client $u \in \mathcal{U}$ transmits one batch with $K_u = K$ samples, including the outputs/labels $\{\textbf{y}_{u,k}\}_{k =1}^K$. Therefore for this client, there are $N^{K}$ different possible ordered batches of outputs in total (where $N$ is the number of possible different labels that can be associated with the input).
\begin{figure}
	\centering
	\includegraphics[width=0.9\linewidth]{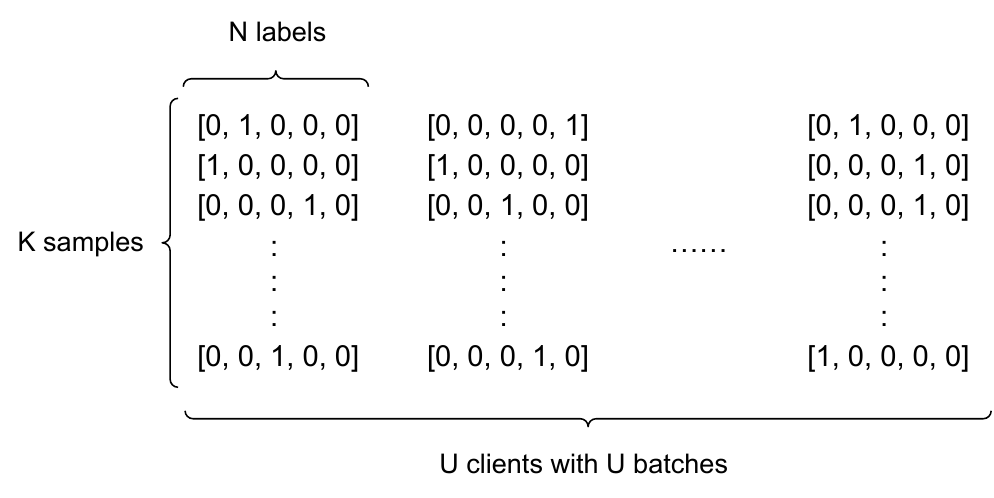}
	\caption{Diagram of every sample label $\textbf{y}_{u,k} \in [0,1]^N$ in batches. }
	\label{fig:batch}
\end{figure}

Assuming that the order in the batch does not contain information, we use $\mathcal{B}$ to denote the set of possible batches without order from client $u$, and the privacy information of the real batch $b$ from client $u$ is the sum vector $\sum^{K}_{k=1} \textbf{y}_{u,k} = [n^y_{1,b}, n^y_{2,b}, \ldots, n^y_{N,b}]$, where $\sum^{N}_{a=1} n^y_{a,b} = K$. Subsequently, the number of possible batches without order is the combination with the replacement of $N$ items taken $K$ times and is given by 
\begin{equation} \label{eq:size_B}
|\mathcal{B}| = \binom{N + K - 1}{K} .
\end{equation}
We denote the index of client $u$'s batch as a random variable $B$, and the index of the uploaded batch as $B = b \in \mathcal{B}$. We denote the type of the label $\textbf{y}_{u,k}$'s distribution as $P_y$. Typically to achieve better performance, in most of the machine learning applications we desire a uniform distribution over different labels, and we denote the uniform distribution over $N$ labels as $P_{y,\text{uni}}=[\frac{1}{N}, \frac{1}{N}, \ldots, \frac{1}{N}]$. 

To evaluate the adversary's knowledge on one given batch $B=b$ of a target victim client, we consider the presumed distribution $\textbf{P}_{B}$ of the batch $B$ from the adversary's perspective to quantify the adversary's uncertainty through the entropy $H(B)$ of the given batch.

\subsubsection{Label Privacy Leakage with Statistical Information} \label{subsubsec:mutual_label_pri}
A weak adversary without any prior knowledge of the sample distribution may presume the sample distribution to be uniform, i.e., $P_{y,\text{uni}}$. In this case, the distribution of ordered batches from the adversary's perspective is also uniform with probability $1/N^{K}$. Each batch index $b \in \mathcal{B}$ without order corresponds to $K! / {\prod^N_{a=1} n^y_{a,b}!}$ different batch indices with order, and hence the adversary's presumed probability distribution of batches without order (under the assumption of uniform sample distribution) is $\textbf{P}_{B|\text{uni}} = [p_{0,1}, p_{0,2}, \ldots, p_{0,|\mathcal{B}|}]$, where
\begin{equation} \label{eq:dist_without_order}
p_{0,b} = \frac{K!}{N^{K} \prod^N_{a=1} n^y_{a,b}!} . 
\end{equation}
Therefore from the weak adversary's perspective, the uncertainty of the batch from client $u$ with known format and size but without prior knowledge about the content can be quantified by the entropy
\begin{equation} \label{eq:entropy_u}
H(B\ |\ P_{y,\text{uni}} ) = - \sum^{|\mathcal{B}|}_{b=1} p_{0,b} \log_2{p_{0,b}} 
\end{equation}
where the condition in the entropy signifies that this is the entropy under the assumption of uniformly distributed samples. 

On the contrary, a stronger adversary with the ability to steal and decode a large amount of uploaded data may learn the structure of the DNN and is able to decode the labels of the $K$ samples in a batch, and hence such an adversary is likely to have the prior knowledge of the general distribution of batches from a large number of clients. In the case that the true sample distribution $P_{y,\text{true}}$ is known, we denote the strong adversary's presumed distribution of the batch from client $u$ as  
\begin{equation} \label{eq:dist_BgivenY}
\textbf{P}_{B|\text{true}} = [p_{1}, p_{2}, \ldots, p_{|\mathcal{B}|}] ,
\end{equation}
and the corresponding uncertainty at the adversary in the original setting before query is
\begin{equation} \label{eq:entropy_ugivenY_extreme}
H(B\ |\ P_{y,\text{true}}) = - \sum^{|\mathcal{B}|}_{b=1} p_{b} \log_2{p_{b}}   .
\end{equation}
By comparing the uncertainty between the weak adversary and the strong adversary, we give the definition of label privacy leakage with statistical information.

\begin{definition}\emph{Label Privacy Leakage with Statistical Information:}\label{def:mutual_privacy}
For an adversary without prior knowledge of the sample distribution (and hence that initially assumes uniform sample distribution), the label privacy leakage is the amount of information regarding the target batch $B$ that leaks to the adversary when it obtains the true distribution $P_{y,\text{true}}$. This privacy leakage can be formulated as follows: 
\begin{equation} \label{eq:mutual_privacy}
L^l_s = H(B \ |\ P_{y,\text{uni}}) - H(B \ | \ P_{y,\text{true}}) .
\end{equation}
\end{definition}

Note that $H(B \ |\ P_{y,\text{uni}})$ is the uncertainty in $B$ under the assumption that the labels are uniformly distributed. $H(B \ |\ P_{y,\text{true}})$ is the remaining uncertainty in $B$ when the true sample distribution is learned by the adversary. Hence, the difference is the information gained by or equivalently leaked to the adversary.

% \begin{remark}\label{lem:mutual_privacy} % \emph{Approximated Jamming Location:}
% If the dataset accessed by the adversary is sufficiently large and perfectly balanced such that $\textbf{P}_{B|\text{true}} = \textbf{P}_{B|\text{uni}}$, the label privacy leakage is zero.
% \end{remark}

%\begin{proof}
Machine learning tasks typically require dataset balancing. If the data collection process is sufficiently well designed so that each output type is almost equally likely and we have $\textbf{P}_{B|\text{true}} \to \textbf{P}_{B|\text{uni}}$, then the  label privacy leakage $L^l_s \to 0$. As we have illustrated in section \ref{subsec:FbFTLLearn}, achieving this goal is more viable in FbFTL.
%\end{proof}

\subsubsection{Label Privacy Leakage with Access via Query} \label{subsubsec:local_label_pri}
Next, we analyze the label privacy leakage when the adversary has access to the shuffled outputs (e.g., via a query).  In the worst case, the strongest query is the process that the adversary acquires all batches without clients' addresses. 
%Since the batches are randomly shuffled without the knowledge of clients' addresses, which batch out of $U$ batches corresponds to which client has a uniform distribution from the adversary's perspective. We use $S$ to denote the number of shuffled batches with different types $b$, and 
Specifically, we assume that the adversary has access to randomly shuffled $U$ batches. Recall that there are $|\mathcal{B}|$ different types of batches. We use $c(b)$ to denote the count for type $b$ batches among given $U$ batches. With this definition, we have $\sum_{b = 1}^{|\mathcal{B}|} c(b) = U$. For different set of $U$ batches, the counts will be different. We define $\mathcal{C}_U$ as a random vector of counts of different types of batches among a total of $U$ batches. Hence, for given $U$ batches, the realization of this vector is $\mathcal{C_U} = c_U = [c(1), c(2), \ldots, c(|\mathcal{B}|)]$. In the absence of any other statistical information, the adversary can utilize the following empirical distribution of $B$ based on the frequency of each batch type among all $U$ batches: 
\begin{equation} \label{eq:dist_BgivenS}
\textbf{P}_{B,\text{emp}} \triangleq \textbf{P}\{B\ |\ \mathcal{C_U} = c_U\} = \bigg[ \frac{c(1)}{U}, \frac{c(2)}{U}, \ldots, \frac{c(|\mathcal{B}|)}{U}\bigg] .
\end{equation}
With this empirical distribution based on the shuffled $U$ batches, the adversary's uncertainty is given by the entropy
\begin{equation} \label{eq:entropy_shuffled}
H(B\ |\ \mathcal{C_U} = c_U) = - \sum^{|\mathcal{B}|}_{b=1} \frac{c(b)}{U} \log_2{\frac{c(b)}{U}} .
\end{equation}

In the case in which the adversary has acquired $U$ batches via the query and knows the distribution $P_{y,\text{true}}$ of the samples, we denote the adversary's presumed distribution as $\textbf{P}(B\ |\ \mathcal{C_U} = c_U,  P_{y,\text{true}})$, and its uncertainty on client $u$'s batch as $H(B\ |\ \mathcal{C_U} = c_U, P_{y,\text{true}})$. 

First, we establish the following result.

\begin{lemma}
   Assume that the true sample distribution is known. Once $U$ batches are revealed to the adversary, the distribution of the batches (from the adversary's perspective) depends only on the frequency/count of each batch type in the unveiled $U$ batches and not on the sample distribution, i.e., 
   \begin{align}
       P(B=b\ |\ \mathcal{C_U} = c_U, P_{y,\text{true}}) = P\{B=b\,|\,\mathcal{C_U} = c_U\}.
   \end{align}
\end{lemma}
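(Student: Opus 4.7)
The plan is to prove the lemma via an exchangeability argument: since the $U$ clients' batches are generated i.i.d.\ from the same underlying distribution (determined by $P_{y,\text{true}}$), the observed multiset of batches is a sufficient statistic for that distribution as far as any single client's batch is concerned.

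First I would fix any client $u$ and consider the joint random vector $(B_1, \ldots, B_U)$ of batches from all $U$ clients under the generative model with label distribution $P_{y,\text{true}}$. Each $B_v$ is drawn i.i.d.\ from some distribution $\pi$ on $\mathcal{B}$ whose entries are determined by $P_{y,\text{true}}$ through (\ref{eq:dist_without_order}) with $1/N$ replaced by the corresponding entries of $P_{y,\text{true}}$. The event $\{\mathcal{C_U} = c_U\}$ is just the event that the multiset $\{B_1, \ldots, B_U\}$ has the prescribed counts $c(1), \ldots, c(|\mathcal{B}|)$.

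Next I would invoke exchangeability: because the $B_v$'s are i.i.d., the joint distribution of $(B_1, \ldots, B_U)$ is invariant under any permutation of the indices. Conditioned on $\mathcal{C_U} = c_U$, every ordering of the observed multiset is equally likely, so the conditional distribution of $(B_1, \ldots, B_U)$ is uniform over the $\binom{U}{c(1),\ldots,c(|\mathcal{B}|)}$ sequences consistent with $c_U$. By a direct counting argument, for any fixed client $u$,
\begin{equation}
P(B_u = b \mid \mathcal{C_U} = c_U, P_{y,\text{true}}) \;=\; \frac{c(b)}{U},
\end{equation}
which is exactly $P(B = b \mid \mathcal{C_U} = c_U)$ as defined in (\ref{eq:dist_BgivenS}). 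Crucially, the underlying distribution $\pi$ (and hence $P_{y,\text{true}}$) cancels out of the conditional probability because $\pi$ enters the likelihood of every ordering of the observed multiset through the same product of factors.

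The only mild obstacle I foresee is being careful that the conditioning event is nondegenerate, i.e.\ that $P(\mathcal{C_U} = c_U \mid P_{y,\text{true}}) > 0$; this holds whenever every $b$ with $c(b) > 0$ has positive probability under $\pi$, which is implicit in the fact that such a realization was actually observed. Apart from this bookkeeping, the proof is essentially the standard observation that the empirical measure of an i.i.d.\ sample is a sufficient statistic, combined with the symmetry of exchangeable sequences.
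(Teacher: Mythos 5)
Your proposal is correct and rests on the same core fact as the paper's proof: the paper applies Bayes' rule and writes the conditional probability as a ratio of two multinomial coefficients, with the $P_{y,\text{true}}$-dependent likelihood factors canceling between numerator and denominator to leave $c(b)/U$ — which is precisely the cancellation your exchangeability argument makes explicit. The presentation differs only cosmetically (symmetry of i.i.d.\ sequences versus direct counting), so this is essentially the paper's argument.
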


\begin{proof}
We prove this lemma by utilizing the Bayes' rule and determining the ratio of two conditional probabilities as follows:
\begin{multline} \label{eq:ys_equal_s}
\begin{aligned} 
P(B=b\ |\ \mathcal{C_U} =  c_U, P_{y,\text{true}}) =& \frac{P(\mathcal{C_U} = c_U,B=b|P_{y,\text{true}}) }{P(\mathcal{C_U} = c_U|P_{y,\text{true}})} \\
= & \frac{\frac{(U-1)!}{(c(b)-1)! \prod_{b'\neq b} c(b')!}}{\frac{U!}{c(b)! \prod_{b'\neq b} c(b')!}}
\\
=&\frac{\frac{(U-1)!}{(c(b)-1)! }}{\frac{U!}{c(b)! }}\\ 
=& \frac{c(b)}{U} 
\\
=&  P\{B=b\,|\,\mathcal{C_U} = c_U\}.
\end{aligned}
\end{multline}    
\end{proof}

This result indicates that the batch distribution is equal to the empirical distribution in (\ref{eq:dist_BgivenS}). Consequently, we also have the following characterization for the entropies:
\begin{align}
H(B\ |\ \mathcal{C_U} = c_U, P_{y,\text{true}}) = H(B\ |\ \mathcal{C_U} = c_U)
\end{align}
Next, we give the definition of the label privacy leakage with query, and identify a condition under which the privacy leakage vanishes.

\begin{definition}\emph{Label Privacy Leakage with Query:}\label{def:local_privacy}
For an adversary with prior knowledge of the true sample distribution $P_{y,\text{true}}$, the label privacy leakage with query is the amount of information regarding the batch of a target client that leaks to the adversary when it obtains the randomly shuffled set of uploaded $U$ batches including the target's batch. This privacy leakage can be formulated as follows: 
\begin{multline} \label{eq:local_privacy}
\begin{aligned}
L^l_q & = H(B\ |\, P_{y,\text{true}}) - H(B\ |\ P_{y,\text{true}}, \,  \mathcal{C_U} = c_U) \\
& = H(B\ |\, P_{y,\text{true}}) - H(B\ |\   \mathcal{C_U} = c_U) .
\end{aligned}
\end{multline}
\end{definition}

%We note that under certain circumstance, the local label privacy leakage with this definition can be negative, which means the uncertainty of the strong adversary in (\ref{eq:entropy_ugivenY_extreme}) defined by the distribution $Y=y$ is an \emph{under-estimation} given "more uniformly" distributed  $S=s$. 

\begin{lemma}\label{lem:local_privacy} % \emph{Approximated Jamming Location:}
As the number of shuffled batches goes to infinity, the label privacy leakage with query converges to 0, i.e., 
\begin{equation} \label{eq:leakage_shuffled_zero}
\lim\limits_{U\to\infty} L^l_q = 0 . 
\end{equation}
\end{lemma}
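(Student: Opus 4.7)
The plan is to leverage the law of large numbers together with continuity of the Shannon entropy on the finite probability simplex. The key observation is that, given the true sample label distribution $P_{y,\text{true}}$, the $U$ batches uploaded by distinct clients are i.i.d.\ random variables on the finite alphabet $\mathcal{B}$ with common distribution $\textbf{P}_{B|\text{true}} = [p_1, \dots, p_{|\mathcal{B}|}]$. So the empirical count vector $\mathcal{C_U}/U = [c(1)/U, \dots, c(|\mathcal{B}|)/U]$ is simply the empirical distribution of an i.i.d.\ sample of size $U$ drawn from $\textbf{P}_{B|\text{true}}$.

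First, I would apply the strong law of large numbers to each indicator $\mathbf{1}\{B_u = b\}$ to obtain $c(b)/U \to p_b$ almost surely for every $b \in \mathcal{B}$; equivalently, the empirical distribution converges to $\textbf{P}_{B|\text{true}}$ in total variation. Second, I would invoke continuity of the Shannon entropy functional $\textbf{q} \mapsto -\sum_b q_b \log_2 q_b$ on the simplex over the finite alphabet $\mathcal{B}$ (with the usual convention $0 \log 0 = 0$) to conclude that
\begin{equation}
H(B \mid \mathcal{C_U} = c_U) = - \sum_{b=1}^{|\mathcal{B}|} \frac{c(b)}{U} \log_2 \frac{c(b)}{U} \;\longrightarrow\; -\sum_{b=1}^{|\mathcal{B}|} p_b \log_2 p_b = H(B \mid P_{y,\text{true}})
\end{equation}
almost surely (and therefore in probability). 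Plugging this into Definition~\ref{def:local_privacy} gives $L^l_q \to 0$, which is the claim. The earlier lemma identifying $H(B \mid \mathcal{C_U} = c_U, P_{y,\text{true}})$ with $H(B \mid \mathcal{C_U} = c_U)$ is what allows one to drop the conditioning on $P_{y,\text{true}}$ cleanly in the second term of $L^l_q$.

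The main obstacle, such as it is, is interpretive rather than technical: the statement $\lim_{U\to\infty} L^l_q = 0$ must be understood as almost sure (or in-probability) convergence with respect to the randomness in the shuffled batch sample $\mathcal{C_U}$, since $c_U$ is itself a realization of a random vector. A secondary technical point is the handling of indices $b$ for which $p_b = 0$: one must verify that both $c(b)/U$ and $p_b$ contribute zero under the $0\log 0 = 0$ convention, so these types cause no discontinuity. Both points are standard and can be dispatched in a line or two, leaving the argument essentially a one-step application of the LLN followed by continuity of entropy.
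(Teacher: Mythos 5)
Your proposal is correct and follows essentially the same route as the paper's own proof: the law of large numbers gives $\textbf{P}_{B,\text{emp}} \to \textbf{P}_{B|\text{true}}$, and convergence of the entropy then yields $H(B\,|\,\mathcal{C_U}=c_U) \to H(B\,|\,P_{y,\text{true}})$, so $L^l_q \to 0$. You merely make explicit two points the paper leaves implicit --- the continuity of entropy on the finite simplex and the almost-sure sense of the limit --- which is a welcome tightening but not a different argument.
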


\begin{proof}
%Since the number of shuffled batches in $s$ of certain type is the count over $U$ sampled batches from the true distribution $y$, when the number of clients $U$ is small, it is likely that $\textbf{P}^a_{B|y} \neq \textbf{P}^a_{B|s}$ and the adversary obtains additional information from getting the shuffled data $S=s$. 
As $U$ grows without bound, we have $\lim\limits_{U\to\infty} \textbf{P}_{B,\text{emp}} = \textbf{P}_{B|\text{true}}$ by the law of large numbers, and as a result, we have the characterization that $\lim\limits_{U\to\infty} H(B\ |\ \mathcal{C_U} = c_U) = H(B\ |\ P_{y,\text{true}})$. Hence, the  label privacy leakage $L^q_l$ with query converges to 0. %Typically, federated learning has a large number of clients (i.e., $U$ is typically large) while $N$ and $K$ are limited, and the local label privacy leakage with shuffled output is negligible. 
\end{proof}

%The superiority of shuffling is also proved in \cite{erlingsson2019amplification} in terms of DP, where the randomly shuffled data guarantee a factor $\sqrt{U}$ of privacy cost reduction in the worst case. 

Note that if the adversary does not initially know even the distribution of the samples and assume a uniformly distributed samples, then total label privacy leakage after the unveiling of the $U$ batches to the adversary can be defined as 
\begin{align}
L_t^l = L_s^l + L_q^l = H(B \ |\ P_{y,\text{uni}}) - H(B \ | \ \mathcal{C_U} = c_U ).
\end{align}

%\begin{lemma}\label{lem:local_privacy_balance} % \emph{Approximated Jamming Location:}
If the shuffled dataset is perfectly balanced and we have $\textbf{P}_{B,\text{emp}} = \textbf{P}_{B|\text{uni}}$, the total label privacy leakage is zero, i.e., $L_t^l = L^l_s + L^l_q = 0$.

\subsection{Feature Privacy Leakage} \label{subsec:inst_pri}
In this section, we analyze the privacy leakage on the input $\textbf{x}$ via uploaded content. Such privacy leakage includes the information that is not necessarily needed to determine the output $\textbf{y}$, and we will define it as feature privacy. Different from label privacy where the adversary knows the implication of each type of output (or the meaning of each label), it is hard to define the adversary's prior knowledge on the gradients and extracted features, and it is unlikely to cancel feature privacy leakage via shuffling. Furthermore, it is hard to quantify and analyze the entropy and mutual information of gradients and extracted features because of the complexity of neural networks. Instead, we will analyze the feature privacy leakage via differential privacy.

For FbFTL, the intermediate output $\textbf{z}$ is also referred to as the smashed data in split learning \cite{gupta2018distributed, vepakomma2018split}, and cannot be directly transformed back to the input $\textbf{x}$ due to the nonlinearity of the activation functions in each layer. However, it is possible that FbFTL leaks privacy to some extent and partially reveals the input. Since the feature before a fully connected layer from a sample can be analytically solved from its gradient \cite{geiping2020inverting}, it is possible that gradient-based frameworks also leak the input. The strategy to extract the input from FedAvg gradients includes deep leakage \cite{zhu2019deep, geiping2020inverting, zhao2020idlg} and gradient inversion \cite{jeon2021gradient, huang2021evaluating}, and is extensively studied for image recognition. However, deep leakage highly depends on the dataset and DNN structure. Therefore, an analysis of the privacy leakage beyond label privacy is needed to identify what may be partially revealed regarding the input $\textbf{x}$ from the uploaded features.

Differential privacy (DP) provides an upper bound on the privacy leakage using a different measuring approach, 
% Since local label privacy is of minor scale comparing to local instance privacy, 
and we compare the feature privacy preserving performances of FbFTL and other schemes via DP. In the DP analysis of feature privacy, we denote the training samples for each batch as $\textbf{d}$ and the function that generates the upload vector as $g$. More specifically, $\textbf{d} = \textbf{s}_{u,k}$ and $g(\textbf{d}) = \textbf{z}_{u, k} = f^1_{\pmb{\theta}^1}(\textbf{x}_{u,k})$ for FbFTL, while $\textbf{d}=\bigcup_{k = 1}^K \textbf{s}_{u,k}$ and $g(\textbf{d}) = \textbf{g}_u = \sum_{k=1}^{K}{\nabla_{\pmb{\theta}}{L(f_{\pmb{\theta}} | \textbf{s}_{u,k})}}$ for gradient-based FedAvg frameworks. Similarly as in \cite{dwork2006calibrating, dwork2011firm, dwork2014algorithmic}, we define the feature privacy through the following condition: $g$ satisfies $(\epsilon, \delta)$-DP if for any subset of possible outputs $G$ it holds that 
\begin{equation} \label{eq:ints_privacy_DP}
\text{Pr} \{ g(\textbf{d}) \in G \} \le e^{\epsilon} \text{Pr} \{ g(\textbf{d}') \in G \} + \delta ,
\end{equation}
where $\textbf{d}$ and $\textbf{d}'$ differ in a single sample with the true distribution $P_{y, \text{true}}$.

In \cite{abadi2016deep}, it has been shown via moments accountant approach that adding Gaussian noise $\textbf{n}$ to $g(\textbf{d})$ prior to transmission maintains an overall privacy loss of $(\epsilon, \delta)$. Typically, the variance of the noise depends on the maximum distance $\max{ \| g(\textbf{d}) - g(\textbf{d}') \| }$ for any two adjacent inputs $\textbf{d}$ and $\textbf{d}'$. To provide a fair comparison between different learning frameworks, we consider the maximum relative distance
% $R = \max{\frac{ \| g(\textbf{d}) - g(\textbf{d}') \| _2  }{ \sqrt{|\textbf{d}|} \text{std} (  g(\textbf{d}) ) }}$, where $| \cdot |$ denotes the length of vector, and $\text{std}$ denotes the standard deviation. 
% $R = \max{\frac{ \| g(\textbf{d}) - g(\textbf{d}') \|  }{ K \text{std} (  g(\textbf{d}) ) }}$, where $\text{std}$ denotes the standard deviation. 
$R = \max\limits_{d, d'}{\frac{ \max \left| g(\textbf{d}) - g(\textbf{d}') \right| }{ K \text{std} (  g(\textbf{d}) ) }}$ within training set for each framework, where $| \cdot |$ denotes absolute value and $\text{std}( \cdot )$ denotes the standard deviation. 
Thus, the additive Gaussian noise to each output $g(\textbf{d}_1)$ should be $\textbf{n} \sim \mathcal{N}(0,\,\sigma^2 R^2 \text{std}^2 (  g(\textbf{d}_1) ) \textbf{I})$ to mitigate the feature privacy leakage, and the lower bound of the $\sigma$ value can be determined via the moments accountant approach. %(refer to the proof of \cite[Lemma 3]{abadi2016deep} in arXiv appendix). 

As a result, according to \cite[Theorem 1]{abadi2016deep}, there exist constants $c_1$ and $c_2$ such that for any $\epsilon < c_1 C^2 I$, the training process with $g(\textbf{d})+\textbf{n}$ is $(\epsilon, \delta)$-differentially private for any $\delta>0$ if we choose $\sigma \ge \frac{ c_2 C}{\epsilon}\sqrt{ - I \log{\delta}} $,
% \begin{equation} \label{eq:ints_privacy_DP_delta}
% \sigma \ge \frac{ c_2 C}{\epsilon}\sqrt{ \frac{I \log{1/\delta}}{|\textbf{d}|}}  ,
% \end{equation}
% \begin{equation} \label{eq:ints_privacy_DP_delta}
% \sigma \ge \frac{ c_2 C}{\epsilon}\sqrt{ - I \log{\delta}}  ,
% \end{equation}
where $I$ is the number of iterations ($I=1$ for FbFTL), and $C$ is the fraction of clients selected in each iteration. When the value of $\delta$ and noise factor $\sigma R$ are fixed, $\epsilon$ for FedAvg DP decreases as the batch size $K$ grows, but increases as training iterations $I$ (and potentially, the number of retraining for hyper-parameter tuning) increase. However, $\epsilon$ for FbFTL DP does not depend on these factors. Furthermore, the final performance also depends on the robustness of each framework against noise. Therefore, we demonstrate the comparisons via experiments in the following subsection.

\subsection{Experimental Results} \label{subsec:exp_pri}

In our experiments, we utilize the dataset CIFAR-10 which has $N=10$ types of labels, and is well-balanced (i.e., $P_{y,\text{true}}=P_{y, \text{uni}}$). We generate shuffled batches according to the uniform sample distribution $P_{y, \text{uni}}$ and evaluate the total label privacy leakage in different scenarios in Fig. \ref{fig:label_leakage}. The blue curve plots the initial uncertainty $H(B\ |\ P_{y, \text{uni}})$ for  different values of $K$ (where $K$ is the number of samples from each client). Each of the other curves shows the uncertainty given shuffled data $H(B\ |\ \mathcal{C}_U = c_U)$ with fixed total amount of samples $UK$ from all clients, and the number of clients $U$ is determined by each value of $K$ accordingly. By Definition~\ref{def:mutual_privacy} and Definition~\ref{def:local_privacy}, the total label privacy leakage given shuffled data is $L_t^l = L^l_s + L^l_q$ and is equal to the difference $H(B\ |\ P_{y, \text{uni}}) - H(B\ |\ \mathcal{C}_U = c_U)$. We see that $H(B\ |\ P_{y, \text{uni}})\approx H(B\ |\ \mathcal{C}_U = c_U)$ when $K$ is small, and thus the label privacy is well preserved. However, for given value of the product $UK$, $H(B\ |\ \mathcal{C}_U = c_U)$ diminishes fast once a certain threshold of $K$ is exceeded. Therefore, we note that the label privacy leakage becomes high when $K$ is large. As addressed before, the label privacy applies in the same way to both FbFTL and gradient-based FedAvg frameworks, and hence a smaller $K$ might be preferred in order to preserve the label privacy. However, current FL privacy analyses typically focus on DP that provides a relatively loose upper bound on the total local privacy. Within the DP setting, studies typically  consider a small number of clients and assume that  each client obtains hundreds of samples, if not thousands of, to achieve improved DP performance. %However, we note that the local label privacy and the part of local instance privacy that may be mitigated via shuffling do not necessarily have lower importance than the rest of the local instance privacy. 
However, as we have observed above, a setting with higher values of $K$  can lead to higher label privacy loss (in addition to requiring more training epochs and achieving less validation accuracy). This leads to the important conclusion that the balance between different types of privacy leakage needs to be considered carefully.
\begin{figure}
	\centering
	\includegraphics[width=1.\linewidth]{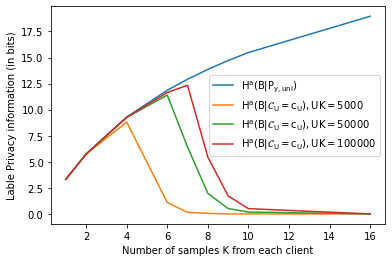}
	\caption{Label privacy of each client in bits against the number of samples $K$ from each client. }
	\label{fig:label_leakage}
\end{figure}

To compare the DP performances of feature privacy leakage, we apply different noise levels $\sigma$ on upload content $g(\textbf{d})$ of different frameworks to achieve the same level $(\epsilon, \delta)$ of privacy, and compare the validation accuracy at convergence with different $K$ in Fig. \ref{fig:DP_all}. In our experiments, we implement the moments accountant via Rényi Divergence-based DP accountant to get a tighter bound. With fixed $\delta=10^{-6}$, each plot shows the validation accuracy at convergence as a function of $\epsilon$ for given $K$. We note that higher $\epsilon$ indicates weaker DP protection, lower noise level $\sigma$, and thus higher accuracy (closer to the original performance without noise). As shown in Fig. \ref{fig:label_leakage}, label privacy is better preserved at $K=4$, and we show the DP performance within the same setting in Fig. \ref{fig:DP_K4}. While gradient-based FedAvg frameworks totally fail, FbFTL has good performance because it has much smaller data size $|g(\textbf{d})|$ and is more robust against noise, as it receives constant data instead of varying perturbation in each training iteration and converges to a sub-optimal model. Such advantage of FbFTL remains until $K$ increases to 100 as shown in Fig. \ref{fig:DP_K100}, where FTL that updates the task-specific sub-model outperforms FbFTL at $\epsilon \ge 4$ as they have the same data size $|g(\textbf{d})|$ while the noise level of FedAvg decreases as $K$ increases. However, the label privacy completely leaks for $K \ge 16$ as we 
have seen in Fig. \ref{fig:label_leakage}. FedAvg for $K \ge 16$ only prevails in protecting the subset of feature privacy that is not mitigated via shuffling. In Fig. \ref{fig:DP_K600}, we show the comparison for even larger $K=600$, where all types of FedAvg perform better than FbFTL.

\begin{figure}[ht]
\centering
\begin{subfigure}[t]{0.47\linewidth}
\includegraphics[width=1.0\linewidth]{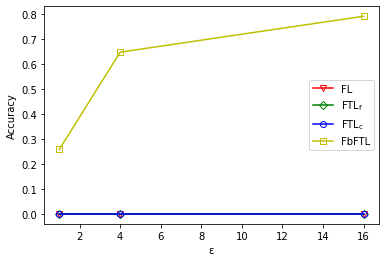}
\caption{K=4.}
\label{fig:DP_K4}
\end{subfigure}
\quad
\begin{subfigure}[t]{0.47\linewidth}
\includegraphics[width=1.0\linewidth]{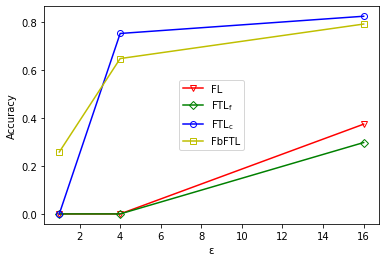}
\caption{K=100.}
\label{fig:DP_K100}
\end{subfigure}
\quad
\begin{subfigure}[t]{0.47\linewidth}
\includegraphics[width=1.0\linewidth]{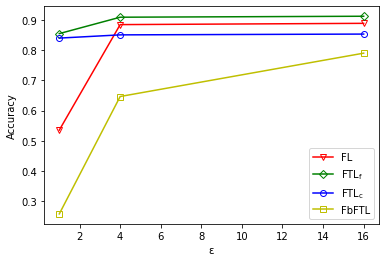}
\caption{K=600.}
\label{fig:DP_K600}
\end{subfigure}
\caption{Comparison of validation accuracy at different $K$, $\epsilon$, and $\delta=10^{-6}$ between FL, FTL updating full model, FTL updating task-specific sub-model and FbFTL. The accuracy is marked as 0 if the model fails to converge.}
\label{fig:DP_all}
\end{figure}

Note that it is extremely difficult to analytically evaluate the volume of feature privacy leakage that can be mitigated via shuffling. Howeover, we can compare the performances of FbFTL and FedAvg for a given $K$ via experimental results as done above. One key conclusion we have is the following. In terms of privacy preservation, FbFTL is preferred when $K$ is small (i.e., each client obtains a small set of samples), and gradient-based FedAvg is preferred when $K$ is large.

% DP against K and iteration I, number of samples in batch: Fig ?

% With the same payload ratio between the frameworks, we consider a wireless context with additive noise to $\textbf{g}_u$ or $\textbf{z}_{u,k}$, and compare the performance under the same SNR. Specifically, we standardize the uploading content, and transmit its mean $\mu_c$ and variance $\sigma_c$ separately. Therefore for each float number $c$, the decoded numeric value at the PS is 
% \begin{equation} \label{eq:noisy_value}
% \hat{c} = c + \sigma_c n  ,
% \end{equation}
% where $n \sim \mathcal{N}(0,\,\sigma_n^2)$ is the noise, and the SNR is $\sigma_c^2 / \sigma_n^2$.

% Fig. \ref{fig:Accuracy_with_noise}, the horizontal axis shows the ratio $\sigma_n^2 / \sigma_c^2$, and the vertical axis shows the corresponding validation accuracy.

% FbFTL: constant data in each iteration, converge to a sub-optimal model

% FL and FTL: varying noise interrupts each iteration, and prevents them from convergence
% \begin{figure}
% 	\centering
% 	\includegraphics[width=1\linewidth]{}
% 	\caption{Accuracy with noise. keep 0 - 0.15?}
% 	\label{fig:Accuracy_with_noise}
% \end{figure}

\section{Conclusion}\label{sec:con}
In this paper, we have presented a novel communication-efficient federated transfer learning method. In this proposed feature-based federated transfer learning (FbFTL), the features and outputs are uploaded rather than the gradient updates. We have provided a thorough description of the system design and the learning algorithm, and compared its theoretical payload with that of federated learning and federated transfer learning. Our results demonstrate substantial reductions in both uplink and downlink payload when using FbFTL. Via experiments, we have further shown the effectiveness of the proposed FbFTL by showing that FbFTL reduces the uplink payload by up to five orders of magnitude compared to that of existing methods. 
Subsequently, we have demonstrated that FbFTL with small batch size has significantly less packet loss rate than gradient-based frameworks, and illustrated its robustness against data insufficiency and quantization. 
Finally, we have considered different types of privacy leakage, and analyzed mitigation approaches. Specifically, we have first analyzed label privacy leakage with both statistical knowledge and query (resulting in access to the label outputs). We have identified a condition under which label privacy vanishes. We have also addressed feature privacy and considered a DP mechanism for preserving this privacy. We have shown that with small $K$ and shuffling that eliminates  label privacy leakage, FbFTL also attains good differential feature privacy protection. These characterizations and results render FbFTL a communication-efficient, robust, and privacy-preserving novel federated transfer learning scheme.

\bibliographystyle{IEEEtran}
\bibliography{ref.bib}

\end{document}